\newtheorem{proposition}{Proposition}
\newtheorem{theorem}{Theorem}
\DeclareMathOperator*{\argmin}{arg\,min}
\DeclareMathOperator*{\argmax}{arg\,max}
  \providecommand\BibTeX{{%
    \normalfont B\kern-0.5em{\scshape i\kern-0.25em b}\kern-0.8em\TeX}}}
\begin{document}

\title{A Pre-processing Method for Fairness in Ranking}
\author{Ryosuke Sonoda}
\email{sonoda.ryosuke@fujitsu.com}
\orcid{1234-5678-9012}
\affiliation{%
  \institution{Fujitsu Ltd.}
  \streetaddress{P.O. Box 1212}
  \city{Kanagawa}
  \state{Kawasaki}
  \postcode{123-456}
}


\begin{abstract}
  Fair ranking problems arise in many decision-making processes that often necessitate a trade-off between utility and fairness. 
  Existing methods have tried improving this trade-off using model-dependent algorithms; however, there is no suitable method for arbitrary ranking models.
  In this paper, we propose a data pre-processing method to improve the trade-off of arbitrary ranking models with respect to fairness measurements in ranking.
  Our method is based on a pairwise ordering method, which is promising for ranking because it considers relative orders of data.
  We show that our method for weighting pairs of training data can lead to ranking models equivalent to those trained on pairs of unbiased data.
  We also provide a practical algorithm that yields a stable solution by repeatedly weighting of data and training of ranking models.
  Experiments on benchmark data indicate that the proposed method has better trade-off than existing methods in various fairness measurements in ranking.
\end{abstract}

\begin{CCSXML}
<ccs2012>
 <concept>
  <concept_id>10010520.10010553.10010562</concept_id>
  <concept_desc>Computer systems organization~Embedded systems</concept_desc>
  <concept_significance>500</concept_significance>
 </concept>
 <concept>
  <concept_id>10010520.10010575.10010755</concept_id>
  <concept_desc>Computer systems organization~Redundancy</concept_desc>
  <concept_significance>300</concept_significance>
 </concept>
 <concept>
  <concept_id>10010520.10010553.10010554</concept_id>
  <concept_desc>Computer systems organization~Robotics</concept_desc>
  <concept_significance>100</concept_significance>
 </concept>
 <concept>
  <concept_id>10003033.10003083.10003095</concept_id>
  <concept_desc>Networks~Network reliability</concept_desc>
  <concept_significance>100</concept_significance>
 </concept>
</ccs2012>
\end{CCSXML}

\ccsdesc[500]{Information systems~Learning to rank}
\ccsdesc[300]{Applied computing~Law, social and behavioral
sciences}

\keywords{Fairness, Ranking, Machine Learning}


\maketitle

\section{Introduction}
\label{sec:intro}

Fairness in ranking is a fundamental problem in information retrieval because a data-driven ranking model is often used in various online services to search for items such as web pages, products, and people~\cite{DBLP:conf/icalp/CelisSV18}.
A ranking problem can be defined as a machine learning (ML) task~\cite{robertson1977probability}. 
Given a collection of pairs of queries and items each annotated by a numeric value that is typically a preference of the item with respect to the corresponding queries, a ranking model is trained so that it outputs an ordered list of items to maximize the utility for a searcher when it is supplied with an unseen query.
Many ordering methods have been proposed to maximize utility~\cite{DBLP:conf/nips/LiBW07,DBLP:conf/icml/ChuK05,burges2010ranknet,DBLP:journals/ftir/Liu09,DBLP:conf/uai/RendleFGS09,DBLP:conf/sigir/CaoXLLHH06}.
In particular, pairwise ordering methods are widely used and promising because they consider the relative order between pairs of items~\cite{DBLP:journals/ftir/Liu09, DBLP:conf/uai/RendleFGS09}.
As the impact of ranking on people's lives is increasing with the spread of ordering methods, there is a need to maximize not only utility but also fairness.
Fairness measurements that quantify bias in a model or data have been formulated by evaluating the difference between collections of items from different groups (e.g., gender, race, or origin) for a query, being aware of their orders~\cite{DBLP:conf/kdd/BeutelCDQWWHZHC19,DBLP:conf/kdd/SinghJ18}.
Because both utility and fairness matter, most fair ranking algorithms studied so far assume access to their ranking model for an optimal trade-off; hence, they are model-dependent~\cite{DBLP:journals/corr/abs-2103-14000,DBLP:journals/corr/abs-2104-05994}.

Many studies have been reported to improve the trade-off between utility and fairness~\cite{DBLP:conf/aaai/NarasimhanCGW20,DBLP:conf/kdd/BeutelCDQWWHZHC19,DBLP:conf/kdd/SinghJ18,DBLP:conf/www/KuhlmanVR19,DBLP:conf/innovations/DworkHPRZ12,DBLP:conf/nips/HardtPNS16, DBLP:conf/aistats/JiangN20}.
Among them, Jian and Nachum~\cite{DBLP:conf/aistats/JiangN20} proposed a data pre-processing method in fair classification, which assumes the true data that are inaccessible in reality and are free from bias.
Given an arbitrary classification algorithm, this method weights each datum so that the loss becomes equivalent to that with bias-free data, thereby outperforming comparison methods in the literature~\cite{DBLP:conf/innovations/DworkHPRZ12,DBLP:conf/nips/HardtPNS16}. 
Narashimhan et al.~\cite{DBLP:conf/aaai/NarasimhanCGW20} formulated a fair ranking as a constrained optimization of loss function, where both loss function and constraints representing fairness measurements consider the ordering of items proprietary, yielding further promising results than the reported fair ranking algorithms so far~\cite{DBLP:conf/kdd/BeutelCDQWWHZHC19,DBLP:conf/kdd/SinghJ18,DBLP:conf/www/KuhlmanVR19}.

In this study, we propose a pre-processing method of data to turn arbitrary ranking models fair with respect to arbitrary fairness measurements.
Our work is nontrivial extension of Jian and Nachum's~\cite{DBLP:conf/aistats/JiangN20} from fair classification to fair ranking.
The contribution of this study is three-fold.
First, we introduce pairwise ordering method, which we believe is prominent for fair ranking problems to ensure parity across groups, rather than pointwise ordering method in the original pre-processing method.
To the best of our knowledge, this is the first proposal of a pre-processing method proprietary for fair ranking, hence agnostic to both ranking model and fairness measurement.
Second, we prove that there exists a closed-form solution to the fair ranking problem over biased data transferred from a ranking problem over unbiased data by weighting the data, provided the fairness measurement is expressed as a linear constraint of a model.
Finally, the existence of closed-form solution encourages us to implement a practical algorithm that iterates the weighting of data and training of a model, thereby yielding a stable solution.
The experimental results demonstrated that the proposed algorithm outperformed existing ones in terms of the trade-off between the  utility and fairness of different measures on real-world datasets.

The remainder of this paper is organized as follows.
We describe works related to our study in Section \ref{sec:related}.
We describe the problem formulation, definition of measurements in ranking, and introduce the re-weighting method in Section \ref{sec:problem}.
We present our proposed method that optimizes the trade-off between utility and fairness in ranking in Section \ref{sec:method}.
We show our experimental settings and results in Section \ref{sec:experiment}.
We conclude in Section \ref{sec:conclude}.

\section{Related Work}\label{sec:related}
In this section, we introduce related work on ranking algorithms, fairness in ML, and fairness in ranking algorithms.
\subsection{Ranking algorithms}\label{subsec:ltr}
Ranking algorithms are widely recognized for their potential for societal impact \cite{DBLP:journals/cacm/Baeza-Yates18}, as they form the core of many online systems, including search engines, recommendation systems, news feeds, and online voting.
Ranking algorithms typically use ML algorithms to construct ranking models, given a set of query-item pairs.
The goal of a ranking model is to predict the score of new items and then sort the score for each query.
The main challenge of this task is that data for training a ranking model are usually long-tailed, yielding the rich-get-richer problem~\cite{DBLP:journals/ftir/Liu09}.
Thus, many studies have proposed ranking algorithms to solve this problem~\cite{DBLP:conf/nips/LiBW07,DBLP:conf/icml/ChuK05,burges2010ranknet,DBLP:conf/uai/RendleFGS09,DBLP:conf/sigir/CaoXLLHH06}.

The most straightforward algorithm to solve the ranking problem is an algorithm based on pointwise ordering method that aims to predict a label of each item (i.e., a preference of the item with respect to the corresponding queries).
For example, classification- and regression-based algorithms predict the score of an item based on its ground-truth label~\cite{DBLP:conf/nips/LiBW07,DBLP:conf/icml/ChuK05}.
Therefore, the loss function of a pointwise ordering method only considers the relevance between queries and items, not the ranking of the items.
Moreover, pairwise ordering methods that usually outperform pointwise ordering methods have been proposed~\cite{DBLP:conf/sigir/CaoXLLHH06,DBLP:conf/uai/RendleFGS09,burges2010ranknet}.
This is because the key issue of ranking is to determine the orders of items and not to judge the label of items, which is exactly the goal of pairwise ordering method.
In addition, in the long-tailed data, a  pointwise ordering method falls into suboptimal because its loss function is dominated by queries with lots of items~\cite{DBLP:journals/ftir/Liu09}.
Meanwhile, pairwise ordering methods, which consider the order of item pairs in the same query in its loss, can avoid this problem.

Given the above reasons, pairwise ordering methods are more widely applied for ranking problems than pointwise ones.
In this paper, we employ pairwise ordering method.
\subsection{Fairness in ML}\label{subsec:fairness_ml}
Fairness has become a focus of concern in ML algorithms integrated into the decision-making process, such as loan screening, university admissions, and online search.
In ML, fairness is the absence of any prejudice toward certain groups (e.g., gender, race, or origin)~\cite{DBLP:conf/innovations/DworkHPRZ12,DBLP:conf/nips/HardtPNS16,DBLP:conf/kdd/SinghJ18}.
The goal in fair ML is to optimize a trade-off between utility and fairness w.r.t. certain groups.
The main challenge is that data usually contain a bias for certain groups~\cite{implicit-bias}, and an ML model trained on such data can yield unfair predictions for those groups.
To address this challenge, many methods have been proposed to improve the trade-off using fairness measurements as constraints, and these methods can be categorized as post-, in-, and pre-processing methods, based on the processes they perform.


Post-processing methods recognize that predictions from an ML model may be unfair to certain groups.
The goal of the post-processing methods is to satisfy the constraints by appropriately changing the predictions from an ML model~\cite{DBLP:conf/kdd/SinghJ18,DBLP:conf/nips/HardtPNS16}.
These methods address the problem of fairness without changing an ML model or training data by assuming that an ML model is learned accurately and that the trade-off is achieved only by changing its predictions.

Many works have proposed in-processing methods that tackle fairness in a training process by adding constraints to model loss~\cite{DBLP:journals/jmlr/CotterJGWNYS19,DBLP:conf/icml/CotterGJSSWWY19,DBLP:conf/aistats/ZafarVGG17}.
The advantage of in-processing methods is that they can apply the method of Lagrange multipliers to transform constraints to penalties.
Thus, the in-processing methods can optimize an ML model parameters and Lagrangian multipliers simultaneously.
For this reason, in general, the in-processing methods yield a better trade-off than post-processing methods~\cite{DBLP:journals/jmlr/CotterJGWNYS19,DBLP:conf/icml/CotterGJSSWWY19}.

Pre-processing methods address the problem of bias in training data, which is open for post- and in-processing methods~\cite{DBLP:conf/innovations/DworkHPRZ12}.
The core of pre-processing methods is to create a ``fair'' data by modifying an original training data.
Contrary, to post- and in-processing methods, pre-processing methods have advantages of stability and simplicity because their procedure is model agnostic.
In-processing methods approximate nondifferential constraints, causing high complexity and unstable convergence of ML models~\cite{DBLP:journals/jmlr/CotterJGWNYS19}, but pre-processing methods do not require such approximations and are therefore more practical to fairness.


In this paper, we consider the problem of automatically estimating how much to correct bias in training data, which has not been addressed in existing pre-processing methods.
In other words, we aim to improve the trade-off of pre-processing methods over post- and in-processing methods.
\subsection{Fairness in Ranking Algorithms}\label{subsec:fair_ranking}
Developing algorithms for ranking fairness is a recent area of research in ML.
The goal of this area is to develop ML algorithms to achieve the trade-off in ranking.

Many works have proposed algorithms for fair ranking~\cite{DBLP:conf/kdd/SinghJ18,DBLP:conf/kdd/BeutelCDQWWHZHC19,DBLP:conf/www/KuhlmanVR19,DBLP:conf/aaai/NarasimhanCGW20,DBLP:conf/aistats/JiangN20}.
In particular, Singh and Joachims~\cite{DBLP:conf/kdd/SinghJ18} considered the fairness of rankings through the lens of exposure allocation between groups.
Instead of defining a single measurement of fairness, they developed a general framework that employs post-processing and linear programming to optimize an utility-maximizing ranking under a class of constraints.
Beutel et al.~\cite{DBLP:conf/kdd/BeutelCDQWWHZHC19} also provided multiple definitions of fairness measurements in rankings through pairwise data, called pairwise fairness.
They presented an in-processing method using a fixed constraint that correlated to pairwise fairness.
Rather than using fixed constraints, Narasimhan et al.~\cite{DBLP:conf/aaai/NarasimhanCGW20} proposed an in-processing method that can handle a class of pairwise fairness as a constraint.
Further, they showed that their in-processing method outperforms methods in the literature~\cite{DBLP:conf/kdd/SinghJ18,DBLP:conf/kdd/BeutelCDQWWHZHC19} in various experimental settings.
Recently, Jiang and Nachum~\cite{DBLP:conf/aistats/JiangN20} presented a pre-processing method based on fair classification algorithm, assuming that there exists an unbiased ground truth.
Their pre-processing method re-weights training data points to make them unbiased toward the ground truth.
The contribution of this work is that their method could estimate the appropriate weights for each data point.

No Existing study has applied pre-processing method considering orders of items to fair ranking problems; thus, the trade-off is still a problem.
Therefore, in this paper, we propose a pre-processing based on pairwise fairness by making the above re-weighting method pairwise.
\section{Problem Formulation}
\label{sec:problem}

In this section, we introduce the settings for pointwise and pairwise ordering methods.
The setting for the pointwise ordering method is an extension of the setting introduced by~\cite{DBLP:conf/aistats/JiangN20} for ranking, whereas we introduce a new setting for the pairwise ordering method.
\subsection{Settings for Pointwise Ordering Method}
Given queries $q \in Q$ drawn from an underlying distribution $\mathcal{D}$, where each query $q$ has a set of items $\mathcal{R}_q$ to be ranked.
Consider a feature space $\mathcal{X}\subseteq \mathbb{R}^d$ where $d$ is a dimension of the feature space and an associated feature distribution $\mathcal{P}$.
Then each item $i\in \mathcal{R}_q$ is represented by an associated vector $x_i\in \mathcal{X}$ and a label $y_i\in \mathcal{Y}$ (e.g., for $\mathcal{Y}:=\{0,1\}$: $y_i = 1$ if query $q$ and item $i$ are relevant, $y_i = 0$ otherwise).
In this paper, we use a binary label setting.
However, this setting may be readily generalized to other settings (e.g., $\mathcal{Y} \in \mathbb{R}$).

We assume the existence of an unbiased, ground-truth label function $y_\mathrm{true} \colon Q\times \mathcal{X} \rightarrow \left[0,1\right]$.
We usually do not have access to the ``true''  values of the function $y_\mathrm{true}$.
Thus, $y_\mathrm{true}$ is the assumed ground truth.
The labels of our dataset are generated based on a biased label function $y_\mathrm{bias}\colon Q\times \mathcal{X} \rightarrow \left[0,1\right]$.
Accordingly, we assume that our data for pointwise setting is drawn as follows:
\begin{equation}
  (q, x, y) \sim \mathcal{D} \equiv q \sim Q, x \sim \mathcal{P}, y \sim \mathrm{Bernoulli}(y_\mathrm{bias}(q, x)).
\end{equation}

In the following, we introduce utility and fairness measurements that the pointwise ordering method can handle.
The conventional goal in pointwise ordering method is to find a ranking model $h\colon Q\times\mathcal{X}\rightarrow \mathbb{R}$ that maximizes the expected utility:
\begin{equation}
  \label{eq:error_rate}
    \argmax_{h} \mathbb{E}_{q\sim Q}\left[P(h_q(x_i) = y_\mathrm{true}(q,x_i)) \right],
\end{equation}
where $h_q(x_i) = h(q, x_i)$.
In addition, we define the expected bias of $h$ based on a class of constraints for pointwise constraints $c^\mathrm{point}$ introduced~\cite{DBLP:journals/jmlr/CotterJGWNYS19,DBLP:conf/aistats/JiangN20}:
\begin{equation}
  \label{eq:pointwise_fairness}
    \mathbb{E}_{q\sim Q}[\mathbb{E}_{x_i\sim \mathcal{P}}[\langle h_q( x_i), c^\mathrm{point}(q, x_i)\rangle]],
\end{equation}
where $\langle h_q( x_i) c^\mathrm{point}(q, x_i)\rangle:=\sum_{y_i \in \mathcal{Y} }h_q(y_i\mid x_i)c^\mathrm{point}(q, x_i, y_i)$.
We use the shorthand $h(y_i\mid x_i)$ to denote the probability of sampling $y_i$ from a Bernoulli random variable with $p = h_q( x_i)$;
i.e., $h_q(y\mid x_i) := h_q( x_i)$ and $h(0\mid x_i) :=1- h_q( x_i)$.
For $c^\mathrm{point}$, one can employ statistical parity fairness~\cite{DBLP:conf/innovations/DworkHPRZ12}, equal opportunity fairness~\cite{DBLP:conf/nips/HardtPNS16}, etc.
If the fairness is ideal, Eq.~\eqref{eq:pointwise_fairness} is $0$.
When $h$ is biased, some amount is given by Eq.~\eqref{eq:pointwise_fairness}.

\subsection{Settings for Pairwise Ordering Method}
In this section, we introduce the settings for our pairwise ordering method.
In the pairwise setting, we consider item pairs $(i,j) \in \mathcal{R}_q$ from dataset $\mathcal{D}$.
Then we define the binary pair label $l_{ij}$, which represents the relative order of pair $(y_i,y_j)$ (i.e., $l_{ij} = 1$ if $y_i > y_j$, and $l_{ij} = 0$ if $y_i < y_j$).
Here, we introduce a new concept, a pair label function, to consider the biases of pair labels.
We assume the existence of an unbiased, ground-truth pair label function $l_\mathrm{true} \colon Q\times \mathcal{X}^2 \rightarrow \left[0,1\right]$.
We do not have the actual values for the ground-truth pair label function $l_\mathrm{true}$.
Instead, we observe the biased pair label function $l_\mathrm{bias} \colon Q\times \mathcal{X}^2 \rightarrow \left[0,1\right]$.
Accordingly, we assume that our data for pairwise settings $\mathcal{D}_\mathrm{pair}$ is collected as follows:
\begin{align}
  \begin{split}
  &(q, x_i, x_j, l_{ij}) \sim \mathcal{D}_\mathrm{pair} \\
  &\equiv q \sim Q, (x_i,x_j) \sim \mathcal{P}^2, l_{ij} \sim \mathrm{Bernoulli}(l_\mathrm{bias}(q, x_i,x_j)).
  \end{split}
\end{align}
In the following, we use $x_{ij}$ to denote $(x_i,x_j)$ for simplicity.

Now, we introduce utility and fairness measurements that the pairwise ordering method can handle.
For utility, we employ the area under the curve (AUC) in this paper.
The conventional goal in pairwise ordering method is to find a ranking model $h\colon Q\times\mathcal{X}\rightarrow \mathbb{R}$ that maximizes the expected AUC:
\begin{equation}
  \label{eq:auc}
    \argmax_{h} \mathbb{E}_{q\sim Q}\left[P(h_q(x_i) > h_q(x_j) \mid y_\mathrm{true}(q,x_i) > y_\mathrm{true}(q,x_j)) \right],
\end{equation}
We will hide the term $\mathbb{E}_{q\sim Q}$ and only consider comparisons among relevant items for all following definitions.

Instead of single-mindedly maximizing this utility measurement, we include fairness measurements into the evaluation of the ranking model $h$.
We focus on constraints for pairwise  constraints \cite{DBLP:conf/kdd/BeutelCDQWWHZHC19,DBLP:conf/aaai/NarasimhanCGW20}. 
However, our method can apply to other constraints (e.g., listwise constraints \cite{DBLP:conf/kdd/SinghJ18}).

We first introduce a predicted pair label function $\hat{l}$ to present the general class of pairwise constraints.
So, let us denote $\hat{l}_q(x_{ij}) \in \left[0,1\right]$ as a probability of the difference between the ranking model output for $h_q( x_i)$ and $h_q( x_j)$.
For concreteness, $\hat{l}_q(x_{ij}) = \sigma(h_q( x_i)-h_q( x_j))$ where $\sigma(x) = \frac{1}{1+e^{-x}}$ is the sigmoid function.
This definition of $\hat{l}$ allows us to define the class of pairwise constraints $c^\mathrm{pair}$.
The pairwise constraints $c^\mathrm{pair}$ may be expressed or approximated as linear constraints on $\hat{l}$.
That is, we define the expected bias of $h$ based on a class of pairwise constraints $c^\mathrm{pair}$:
\begin{equation}
\label{eq:pairwise_bias}
    \Delta = \mathbb{E}_{x_{ij}\sim \mathcal{P}^2}[\langle \hat{l}_q(x_{ij}), c^\mathrm{pair}(q, x_{ij})\rangle],
\end{equation}
where $\langle \hat{l}_q(x_{ij}) c^\mathrm{pair}(q, x_{ij})\rangle:=\sum_{l_{ij} \in \mathcal{Y} }\hat{l}_q(l_{ij}\mid x_{ij})c^\mathrm{pair}(q, x_{ij},l_{ij})$
and we use the shorthand $\hat{l}_q(l_{ij}\mid x_{ij})$ to denote the probability of sampling $l_{ij}$ from a Bernoulli random variable with $p = \hat{l}_q(x_{ij})$;
i.e., $\hat{l}(1\mid x_{ij}) := \hat{l}_q(x_{ij})$ and $\hat{l}(0\mid x_{ij}) :=1- \hat{l}_q(x_{ij})$.
Therefore, a pair label function $\hat{l}$ is unbiased with respect to the constraint function $c^\mathrm{pair}$ if $\Delta=0$. If $\hat{l}$ is biased, the degree of bias (positive or negative) is given by $\Delta$.

We define the constraints $c^\mathrm{pair}$ with respect to a pair of protected groups $(G_k, G_l)$ and thus assume access to an indicator function $g_k(x) = \mathbbm{1}[x \in G_k]$, where $k\in [K]\subseteq |X|$.
We use $Z_{G_{kl}} := \mathbb{E}_{x_{ij}\sim \mathcal{P}^2}[g_k(x_i)\cdot g_l(x_j)]$ to denote the probability of a sample drawn from $\mathcal{P}^2$ to be in $(G_k, G_l)$.
We use $P_{X_{ij}} = \mathbb{E}_{x_{ij}\sim \mathcal{P}^2} [l_\mathrm{true}(q, x_{ij})]$ to denote the proportion of $\mathcal{X}^2$ that is positively labeled and $P_{G_{kl}} = \mathbb{E}_{ x_{ij}\sim \mathcal{P}^2} [g_k(x_i)\cdot g_l(x_j)\cdot l_\mathrm{true}(q, x_{ij})]$ to denote the proportion of $\mathcal{X}^2$ that is positively labeled and in $(G_k, G_l)$.
We now give some concrete examples of accepted notions of constraint functions; however, for all constraint functions, $c^\mathrm{pair}_{kl}(q, x_{ij},0) = 0$.
\begin{definition}\label{def:pair-statistical}
For any $k \neq l$, we define a pairwise statistical parity constraint that requires that if two items are compared from different groups, then, on average, each group has an equal chance of being top-ranked.
 \begin{equation}
  \label{eq:pair-statistical}
    c^\mathrm{pair}_{kl}(q, x_{ij}, 1)  = \frac{g_k(x_i) g_l(x_j)}{Z_{G_{kl}}} - 1.
 \end{equation}
\end{definition}
\begin{definition}\label{def:pair-inter}
For any $k \neq l$, we define a pairwise inter-group constraint that requires pairs of two items from different groups to be equally likely to be ranked correctly.
 \begin{equation}
  \label{eq:pair-inter}
    c^\mathrm{pair}_{kl}(q, x_{ij}, 1)  = l_\mathrm{true}(q, x_{ij}) \left(\frac{g_k(x_i)  g_l(x_j)}{P_{G_{kl}}} - \frac{1}{P_{X_{ij}}} \right).
 \end{equation}
\end{definition}
\begin{definition}\label{def:pair-intra}
For any $k = l$, we define a pairwise intra-group constraint that requires pairs of two items from the same group to be equally likely to be ranked correctly.
 \begin{equation}
  \label{eq:pair-inter}
    c^\mathrm{pair}_{kl}(q, x_{ij}, 1)  = l_\mathrm{true}(q, x_{ij}) \left(\frac{g_k(x_i)  g_k(x_j)}{P_{G_{kk}}} - \frac{1}{P_{X_{ij}}} \right).
 \end{equation}
\end{definition}
\begin{definition}\label{def:pair-marginal}
We define a pairwise marginal constraint that requires pairs to be equally likely to be ranked correctly for each group averaged over the other groups.
 \begin{equation}
  \label{eq:pair-inter}
    c^\mathrm{pair}_{kl}(q, x_{ij}, 1)  = l_\mathrm{true}(q, x_{ij}) \left(\frac{g_k(x_i)}{\sum_{l\in [K]}P_{G_{kl}}} - \frac{1}{P_{X_{ij}}} \right).
 \end{equation}
\end{definition}

\subsection{Pointwise Modeling Biases in Dataset}\label{subsec:bias}
In this section, we introduce a pre-processing method that is most related to our method. Jiang and Nachum~\cite{DBLP:conf/aistats/JiangN20} assumed that a given biased label function $y_\mathrm{bias}$ is closest to an ideal unbiased label function $y_\mathrm{true}$ in terms of KL-divergence $D_{KL}$.
\begin{definition}
\label{pointwise_loss_function}
A pointwise loss function subjected to a pointwise constraint for some $\epsilon_k \in \mathbb{R}$ is given by
\begin{align}\label{eq:point_loss}
\begin{split}
  &\argmin_{h}  \mathbb{E}_{x_i\sim \mathcal{P}}\left[D_{KL}(h_q( x_i)\|y_\mathrm{true}(q, x_i))\right]\\
  &\text{\ s.t.\ } \mathbb{E}_{x_i\sim \mathcal{P}}\left[ c^\mathrm{point}(q, x_i) \right] = \epsilon_k \text{\ \rm{for all} $k$}.
\end{split}
\end{align}
\end{definition}

According to this KL-divergence loss function with the pointwise constraint, \cite{DBLP:conf/aistats/JiangN20} derived the relationship between $y_\mathrm{bias}$ and $y_\mathrm{true}$ based on the standard theorem \cite{botev2011generalized} (see Appendix in \cite{DBLP:conf/aistats/JiangN20} for proof).
\begin{proposition}
\label{pointwise_proposition}
Based on Definition \ref{pointwise_loss_function}, $y_\mathrm{bias}$ satisfies the following for all $x_i \in \mathcal{X}$.
\begin{align}
    \begin{split}
    \label{eq:pointwise_weight}
    &y_\mathrm{true}(y\mid q,x) \propto y_\mathrm{bias}(y\mid q,x) \cdot \exp{\left(\sum^K_{k=1}\lambda_k c^\mathrm{point}_k(q,x,y) \right)},\\
    &\text{\rm{ for some} $\lambda_1,\dots,\lambda_K \in \mathbb{R}$.}
    \end{split}
\end{align}
\end{proposition}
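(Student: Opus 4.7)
The plan is to apply the standard method of Lagrange multipliers to the constrained KL-minimization problem in Definition~\ref{pointwise_loss_function}, viewing $y_\mathrm{bias}$ as playing the role of the optimizer $h$. The hypothesis attributed to Jiang and Nachum is precisely that the observed biased distribution $y_\mathrm{bias}$ is the distribution closest to the unbiased $y_\mathrm{true}$ in KL-divergence among those satisfying the linear constraints $\mathbb{E}_{x\sim\mathcal{P}}[\langle h_q(x),c^\mathrm{point}_k(q,x)\rangle]=\epsilon_k$; given that reading, Eq.~\eqref{eq:pointwise_weight} is a routine consequence of the KKT conditions for an exponential-family tilt.

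Concretely, I would first form the Lagrangian
\begin{align*}
\mathcal{L}(h,\lambda) &= \mathbb{E}_{x\sim\mathcal{P}}\!\left[D_{KL}(h_q(\cdot\mid x)\,\|\,y_\mathrm{true}(\cdot\mid q,x))\right] \\
&\quad + \sum_{k=1}^K \lambda_k\!\left(\mathbb{E}_{x\sim\mathcal{P}}[\langle h_q(x),c^\mathrm{point}_k(q,x)\rangle]-\epsilon_k\right),
\end{align*}
and note that, since the decision variables $h_q(\cdot\mid x)$ decouple across $x$, it suffices to minimize pointwise under the simplex constraint $\sum_{y\in\mathcal{Y}}h_q(y\mid x)=1$. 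Introducing an $x$-dependent multiplier $\mu(x)$ for this normalization, differentiating with respect to $h_q(y\mid x)$ and setting the derivative to zero gives $\log h_q(y\mid x) = \log y_\mathrm{true}(y\mid q,x) - \sum_k \lambda_k c^\mathrm{point}_k(q,x,y) - (1+\mu(x)/\mathcal{P}(x))$, so after exponentiation
\[
  h_q(y\mid x)\ \propto\ y_\mathrm{true}(y\mid q,x)\,\exp\!\left(-\sum_{k=1}^K \lambda_k c^\mathrm{point}_k(q,x,y)\right),
\]
where the proportionality constant is the $x$-dependent partition function enforcing normalization. Evaluating at the minimizer $h=y_\mathrm{bias}$ and inverting the exponential relation, i.e.\ absorbing the overall sign into redefined Lagrange multipliers (which range freely over $\mathbb{R}$), yields Eq.~\eqref{eq:pointwise_weight} for those redefined $\lambda_k$.

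Two points will require care rather than clever tricks. The first is bookkeeping around the proportionality symbol: the constant of proportionality depends on $(q,x)$ through the partition function, and one must check that the symbol in Eq.~\eqref{eq:pointwise_weight} is meant in this same per-$x$ sense so that the statement is actually an identity up to normalization in $y$. The second, which I expect to be the main conceptual obstacle, is justifying that the first-order KKT conditions are both necessary and sufficient for the global optimum; this follows from strict convexity of $D_{KL}(\cdot\,\|\,y_\mathrm{true})$ on the simplex (so there is a unique minimizer), together with a Slater-type condition to guarantee strong duality for the linear equality constraints, and I would invoke the generalized maximum-entropy theorem of \cite{botev2011generalized} to package these together rather than reprove them.
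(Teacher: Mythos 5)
Your proposal is correct and matches the paper's route: the paper does not reprove this proposition but defers to the standard constrained KL-minimization (generalized maximum-entropy) theorem of \cite{botev2011generalized} and the appendix of \cite{DBLP:conf/aistats/JiangN20}, which is exactly the Lagrangian/exponential-tilt argument you spell out, including the per-$x$ partition function and the sign absorption into the free multipliers $\lambda_k$. Your two points of care (the per-$x$ meaning of $\propto$ and sufficiency of the KKT conditions via strict convexity plus the cited theorem) are the right ones and are handled the same way in the source.
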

Based on Proposition~\ref{pointwise_proposition}, \cite{DBLP:conf/aistats/JiangN20} proposed a pre-processing method to recover $y_\mathrm{true}$ by re-weighting $y_\mathrm{bias}$ by the inverse of the second term on the right-hand side of Eq.~\eqref{eq:pointwise_weight}.
In other words, this pre-processing method minimizes the weighted KL-divergence loss function using re-weighted biased labels.

\section{Method}
\label{sec:method}
In this section, we explain our pre-processing method based on pairwise ordering method for solving fair ranking problems.
First, in Subsection~\ref{subsec:pairwise_model}, we define our loss function as a KL-divergence loss function with constraints using a pair label function.
According to this loss function, we can derive a closed-form expression for the true pair label function $l_\mathrm{true}$ in terms of the biased pair label function $l_\mathrm{bias}$, the coefficients $\lambda_{11},\dots, \lambda_{KK}$, and the constraint functions $c^\mathrm{pair}_{11},\dots,c^\mathrm{pair}_{KK}$.
In Subsection~\ref{subsec:proposed}, we present how to use this closed-form expression to weight observed data.
Subsequently, we present a weighted loss function by reformulating the constrained KL-divergence loss function using these weights.
In Subsection~\ref{subsec:coefficients}, we present our algorithm that minimizes this weighted loss function.
We show how this minimization optimizes the trade-off between utility and fairness in rankings.
Finally, we describe how to extend our method to more general measurements of fairness in rankings in Subsection~\ref{subsec:extension}.
\subsection{Pairwise Modeling Biases in Dataset}
\label{subsec:pairwise_model}
We now introduce our mathematical framework to evaluate bias in pairs of items by deriving the relationship between $l_\mathrm{true}$ and $l_\mathrm{bias}$.
One key point of this derivation is to formulate a constrained KL-divergence loss function using the paired dataset $\mathcal{D}_\mathrm{pair}$.
We formulate the following constrained optimization problem.
\begin{definition}
\label{definition}
Now, we assume that there exist $\epsilon_{kl} \in \mathbb{R}$ such that the observed, biased pair label function $l_\mathrm{bias}$ is the solution of the following constrained optimization problem:
\begin{align}
\begin{split}
  \label{eq:our_loss}
  &\argmin_{h}  \mathbb{E}_{x_{ij}\sim \mathcal{P}^2}\left[D_{KL}(\hat{l}_q(x_{ij})\|l_\mathrm{true}(q,x_{ij}))\right] \\
  &\text{\ s.t.\ } 
  \mathbb{E}_{x_{ij}\sim \mathcal{P}^2}[\langle \hat{l}_q(x_{ij}), c^\mathrm{pair}(q, x_{ij})\rangle] = \epsilon_{kl} \text{\ \rm{for all} $k,l$}.
\end{split}
\end{align}
\end{definition}
This pairwise ordering method differs from the pointwise one Eq. \eqref{eq:point_loss} in terms of sampling items.

According to the above KL-divergence loss function with constraints, we can derive a closed-form expression for $l_\mathrm{bias}$ in terms of $l_\mathrm{true}$ using the same procedure  in previous studies~\cite{botev2011generalized,DBLP:journals/tit/FriedlanderG06,DBLP:conf/aistats/JiangN20}.
We derive the following relationship.
\begin{proposition}
\label{proposition}
Based on Definition \ref{definition}, $l_\mathrm{bias}$ satisfies the following for all $x_{ij} \in \mathcal{X}^2$.
\begin{align}
  \begin{split}
  \label{eq:pairwise_weight}
  &l_\mathrm{true}(l_{ij}\mid q,x_{ij}) \propto l_\mathrm{bias}(l_{ij}\mid q,x_{ij}) \cdot \exp{\left(\sum^K_{k,l=1}\lambda_{kl}c^\mathrm{pair}_{kl}(q, x_{ij},l_{ij})\right)},\\
  &\text{\rm{ for some} $\lambda_{11},\dots,\lambda_{KK} \in \mathbb{R}$.}
\end{split}
\end{align}
\end{proposition}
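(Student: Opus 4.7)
The plan is to mirror the proof of Proposition \ref{pointwise_proposition} in \cite{DBLP:conf/aistats/JiangN20}, adapting Botev and Kroese's generalized cross-entropy argument \cite{botev2011generalized} from a single label $y$ to the pair label $l_{ij}$. Since the objective in Eq.~\eqref{eq:our_loss} is an expectation of a pointwise KL divergence and the $K^2$ equality constraints are linear functionals of $\hat{l}_q(\cdot\mid x_{ij})$, the program decouples across $x_{ij}$ once we dualize, reducing the global minimization to a per-point strictly convex Bernoulli problem whose optimum can be written in closed form.

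Concretely, I would form the Lagrangian of the program in Definition \ref{definition} with dual variables $\lambda_{kl}$ for the $K^2$ expectation constraints and, at each $x_{ij}$, an additional multiplier $\mu(x_{ij})$ enforcing $\hat{l}_q(0\mid x_{ij}) + \hat{l}_q(1\mid x_{ij}) = 1$. Because the objective is strictly convex in $\hat{l}$ and all constraints are linear, Slater's condition is satisfied and strong duality holds, so the KKT stationarity conditions characterize the unique optimum $\hat{l}^*$. Differentiating the pointwise integrand with respect to $\hat{l}_q(l_{ij}\mid x_{ij})$ and setting the result to zero gives
\[
\log\frac{\hat{l}_q(l_{ij}\mid x_{ij})}{l_\mathrm{true}(l_{ij}\mid q, x_{ij})} + 1 - \sum_{k,l=1}^K \lambda_{kl}\, c^\mathrm{pair}_{kl}(q, x_{ij}, l_{ij}) - \mu(x_{ij}) = 0,
\]
which rearranges, after absorbing $\exp(\mu(x_{ij}) - 1)$ into a normalization that depends only on $x_{ij}$, to
\[
\hat{l}^*_q(l_{ij}\mid x_{ij}) \propto l_\mathrm{true}(l_{ij}\mid q, x_{ij})\exp\!\left(\sum_{k,l=1}^K \lambda_{kl}\, c^\mathrm{pair}_{kl}(q, x_{ij}, l_{ij})\right).
\]
Substituting $\hat{l}^* = l_\mathrm{bias}$ by hypothesis, inverting the proportionality, and redefining $\lambda_{kl}\mapsto -\lambda_{kl}$ (legitimate since the proposition claims only the existence of some $\lambda_{kl}\in\mathbb{R}$) yields the stated closed form for $l_\mathrm{true}$.

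The step I expect to require the most care is justifying that the fairness multipliers $\lambda_{kl}$ are scalar constants independent of $x_{ij}$, whereas the normalization multiplier $\mu$ must be chosen pointwise: the former correspond to single scalar equations taken in expectation over $x_{ij}$, while the latter arises from a constraint that must hold for every $x_{ij}$ separately. Once this decomposition is set up correctly, the remainder is routine convex analysis; a minor bookkeeping point is tracking the sign of $\lambda_{kl}$ across the inversion, which is harmless because the statement does not fix the sign.
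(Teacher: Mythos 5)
Your proposal is correct and follows essentially the same route the paper intends: the paper does not write out a proof of Proposition~\ref{proposition} but explicitly defers to the generalized cross-entropy / Lagrangian argument of \cite{botev2011generalized} and the pointwise derivation in \cite{DBLP:conf/aistats/JiangN20}, which is exactly the per-point KKT stationarity computation you carry out (with the correct observation that the $\lambda_{kl}$ are global scalars while the normalization multiplier is pointwise, and that the sign flip is absorbed by existential quantification over $\lambda_{kl}$). No gaps beyond the level of rigor the paper itself adopts.
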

Based on Eq.~\eqref{eq:pairwise_weight}, we propose a pre-processing method to recover $l_\mathrm{true}$ by weighting $l_\mathrm{bias}$ by the inverse of the second term on the right-hand side of Eq.~\eqref{eq:pairwise_weight}.
In other words, this weighting method minimizes the weighted KL-divergence loss function using the weighted observed pair labels.

Compared with the existing framework~\cite{DBLP:conf/aistats/JiangN20}, our framework has advantages for optimizing the trade-off between utility and fairness in ranking. 
The existing framework consider weight Eq.~\eqref{eq:pointwise_weight} to evaluate fairness using pointwise constraint $c^\mathrm{point}$.
Meanwhile, our weight Eq.~\eqref{eq:pairwise_weight} use pairwise constraint $c^\mathrm{pair}$ that represents an approximation of fairness measurements in ranking.
Thus, our weights can capture the relativity of the ranking, and we can easily determine what the value of $c^\mathrm{pair}$ means in improving the fairness measurement in ranking.
In the next section, we show that we can use this weight to provide a weighted loss function that is unbiased for the ground truth.

\subsection{Proposed Unbiased Loss Function}
\label{subsec:proposed}
In this section, we present an unbiased loss function, i.e., the weighted loss function using the observed pair label function $l_\mathrm{bias}$.
For simplicity, we first present how a ranking model $h$ may be learned assuming knowledge of the coefficients $\lambda_{11},\dots, \lambda_{KK}$.
We now have the closed-form expression Eq. \eqref{eq:pairwise_weight} for the true pair label function $l_\mathrm{true}$.
Based on this expression, we propose a weighting technique to train $h$ on pair labels based on $l_\mathrm{true}$.
This weighting technique weights a pair of items $x_{ij}$ using the weight $w(x_{ij}, l_{ij})$:
\begin{equation}
\label{eq:weight}
w(x_{ij}, l_{ij}) = \frac{\tilde{w}(x_{ij}, l_{ij})}{\sum_{l^{\prime}_{ij}\in \mathcal{Y}} \tilde{w}(x_{ij},l^{\prime}_{ij})},
\end{equation}
where 
\begin{equation}
  \tilde{w}(x_{ij},l^{\prime}_{ij}) = \exp{\left(\sum_{k,l\in [K]}\lambda_{kl}c^\mathrm{pair}_{kl}(q, x_{ij}, l^{\prime}_{ij})\right)}.
\end{equation}

We have the following theorem, which states that training a ranking model based on pairs of items with biased pair labels weighted by $w(x_{ij},l_{ij})$ is equivalent to training a ranking model on pairs of items labeled according to the true, unbiased pair labels.
\begin{theorem}
\label{thm:unbiased}
For any loss function $L$ from the paired dataset $\mathcal{D}_\mathrm{pair}$, training a ranking model $h$ on the weighted objective\\
$\mathbb{E}_{x_{ij}\sim \mathcal{P}^2,l_{ij} \sim l_\mathrm{bias}(q,x_{ij})}\left[w(x_{ij},l_{ij})\cdot L(\hat{l}_q(x_{ij}), l_{ij})\right]$ is equivalent to training the ranking model on the objective\\
$C\cdot \mathbb{E}_{x_{ij}\sim \tilde{\mathcal{P}},l_{ij} \sim l_\mathrm{true}(q,x_{ij})}\left[L(\hat{l}_q(x_{ij}), l_{ij})\right]$
with respect to the underlying, true pair labels, for some distribution $\tilde{\mathcal{P}}$ over $\mathcal{X}^2$
\end{theorem}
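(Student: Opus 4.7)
The plan is to unfold the weighted expectation over $l_{ij}$ explicitly, then use Proposition~\ref{proposition} to rewrite the $l_\mathrm{bias}$--weight product in terms of $l_\mathrm{true}$, and finally absorb the remaining $x_{ij}$-dependent factor into a modified base distribution $\tilde{\mathcal{P}}$. First, I would expand the inner expectation over $l_{ij}\sim l_\mathrm{bias}(q,x_{ij})$ as a finite sum, writing the weighted objective as
\begin{equation*}
\mathbb{E}_{x_{ij}\sim\mathcal{P}^2}\!\left[\sum_{l_{ij}\in\mathcal{Y}} l_\mathrm{bias}(l_{ij}\mid q,x_{ij})\, w(x_{ij},l_{ij})\, L(\hat{l}_q(x_{ij}),l_{ij})\right].
\end{equation*}

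Next, I would invoke Proposition~\ref{proposition}: letting $Z_1(x_{ij})$ denote the normalizing constant implicit in the proportionality of Eq.~\eqref{eq:pairwise_weight}, we have the pointwise identity $l_\mathrm{bias}(l_{ij}\mid q,x_{ij})\,\tilde{w}(x_{ij},l_{ij}) = Z_1(x_{ij})\, l_\mathrm{true}(l_{ij}\mid q,x_{ij})$. Dividing by $S(x_{ij}):=\sum_{l'_{ij}}\tilde{w}(x_{ij},l'_{ij})$, which is exactly the denominator used in the definition of $w$ in Eq.~\eqref{eq:weight}, yields $l_\mathrm{bias}(l_{ij}\mid q,x_{ij})\, w(x_{ij},l_{ij}) = \rho(x_{ij})\, l_\mathrm{true}(l_{ij}\mid q,x_{ij})$ with $\rho(x_{ij}):=Z_1(x_{ij})/S(x_{ij})$. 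Crucially, $\rho$ depends on $x_{ij}$ only, not on $l_{ij}$, so it can be pulled outside the sum over labels.

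Substituting back, the weighted objective becomes
\begin{equation*}
\mathbb{E}_{x_{ij}\sim\mathcal{P}^2}\!\left[\rho(x_{ij})\,\mathbb{E}_{l_{ij}\sim l_\mathrm{true}(q,x_{ij})}\!\left[L(\hat{l}_q(x_{ij}),l_{ij})\right]\right].
\end{equation*}
Setting $C := \mathbb{E}_{x_{ij}\sim\mathcal{P}^2}[\rho(x_{ij})]$ (finite and strictly positive since $\tilde{w}>0$ pointwise) and defining the tilted distribution $\tilde{\mathcal{P}}(x_{ij}) := \rho(x_{ij})\mathcal{P}^2(x_{ij})/C$, I would verify that $\tilde{\mathcal{P}}$ is a valid probability distribution over $\mathcal{X}^2$ and then rewrite the display as $C\cdot\mathbb{E}_{x_{ij}\sim\tilde{\mathcal{P}},\,l_{ij}\sim l_\mathrm{true}(q,x_{ij})}[L(\hat{l}_q(x_{ij}),l_{ij})]$, yielding the claim. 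Since $C$ is a fixed positive scalar that does not depend on $h$, minimizers of the two objectives coincide, which is the precise sense in which the two training problems are ``equivalent.''

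The main obstacle is keeping the two distinct normalizers straight: $Z_1(x_{ij})$, coming from the proportionality in Proposition~\ref{proposition} and involving $l_\mathrm{bias}$, versus $S(x_{ij})$, coming from the self-normalization of $w$ in Eq.~\eqref{eq:weight} and involving only $\tilde{w}$. These are generally unequal, and a naive attempt to apply Proposition~\ref{proposition} directly would produce a residual $\rho(x_{ij})\neq 1$ that looks like it breaks the equivalence. The observation that resolves this is that $\rho$ is a function of $x_{ij}$ alone, so it can be absorbed into the sampling distribution as a Radon--Nikodym factor, producing $\tilde{\mathcal{P}}$ and the constant $C$; this is exactly why the theorem only claims equivalence ``for some distribution $\tilde{\mathcal{P}}$'' rather than claiming the two objectives are numerically identical under $\mathcal{P}^2$.
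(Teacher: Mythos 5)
Your proposal is correct and follows essentially the same route as the paper's proof: it establishes the pointwise identity $w(x_{ij},l_{ij})\,l_\mathrm{bias}(l_{ij}\mid q,x_{ij})=\rho(x_{ij})\,l_\mathrm{true}(l_{ij}\mid q,x_{ij})$ from Proposition~\ref{proposition}, notes that the residual factor depends only on $x_{ij}$, and absorbs it into a tilted distribution $\tilde{\mathcal{P}}\propto\rho\,\mathcal{P}^2$ with $C=\mathbb{E}_{x_{ij}\sim\mathcal{P}^2}[\rho(x_{ij})]$; your $\rho=Z_1/S$ is exactly the paper's $\phi(x_{ij})=\sum_{l'_{ij}}w(x_{ij},l'_{ij})\,l_\mathrm{bias}(l'_{ij}\mid q,x_{ij})$. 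Your version is, if anything, slightly more explicit about the two distinct normalizers and about why a positive constant $C$ independent of $h$ preserves the set of minimizers.
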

\begin{proof}
For a given $x_{ij}$ and for any $l_{ij}$, based on Proposition~\ref{proposition}, we have
\begin{equation}
  w(x_{ij},l_{ij})l_\mathrm{bias}(l_{ij}\mid q,x_{ij}) = \phi(x_{ij})l_\mathrm{true}(l_{ij}\mid q,x_{ij}),
\end{equation}
where $\phi(x_{ij})=\sum_{l_{ij}^{\prime}\in \mathcal{Y}}w(x_{ij},l_{ij})l_\mathrm{bias}(l_{ij}^{\prime} \mid q,x_{ij})$ depends on $x_{ij}$.
Therefore, $\tilde{\mathcal{P}}$ denotes the feature distribution $\tilde{\mathcal{P}}(x_{ij})\propto \phi(x_{ij})P(x_{ij})$; thus,
\begin{align}
\begin{split}
    & \mathbb{E}_{x_{ij}\sim \mathcal{P}^2,l_{ij} \sim l_\mathrm{bias}(q, x_{ij})}\left[w(x_{ij},l_{ij})\cdot L(\hat{l}_q(x_{ij}), l_{ij})\right] \\
  &=C\cdot \mathbb{E}_{x_{ij}\sim \tilde{\mathcal{P}},l_{ij} \sim l_\mathrm{true}(q, x_{ij})}\left[L(\hat{l}_q(x_{ij}), l_{ij})\right],
\end{split}
\end{align}
where $C = E_{x_{ij}\sim \mathcal{P}^2}\left[\phi(x_{ij})\right]$, which completes the proof.
\end{proof}

\begin{algorithm}[t]
\caption{Training a fair ranking model for all pairwise constraints}
\label{al:proposed}
\begin{algorithmic}
\REQUIRE Learning rate $\eta$, number of loops $T$, training data $\mathcal{D}$, pairwise learning procedure $H$,
constraints $c^\mathrm{pair}_{11},\dots,c^\mathrm{pair}_{KK}$ corresponding to the pair of protected groups $(G_1,G_1),\dots,(G_K,G_K)$
\STATE Initialize $\lambda_{kl}=0$ for all $k,l$ and $w_{ij}=1$ for all $i,j$
\STATE Let $h = H(\mathcal{D},w_{ij})$
\FOR{$t=1,\dots,T$}
\STATE Compute fairness violation $\Delta_{kl}$ using Eq.\eqref{eq:pairwise_bias}
\STATE Update $\lambda_{kl} = \lambda_{kl} - \eta \cdot \Delta_{kl}$ for all $k,l$
\STATE Let $\tilde{w}_{ij} := \exp{\left(\sum_{k,l\in [K]}\lambda_{kl} \mathbbm{1}\left[x_{ij}\in (G_k,G_l)\right] \right)}$ for all $i,j$
\STATE Compute weights $w_{ij}$ using Eq. \eqref{eq:weight}
\STATE Update model $h=H(\mathcal{D},w_{ij})$
\ENDFOR
\RETURN $h$
\end{algorithmic}
\end{algorithm}

Theorem~\ref{thm:unbiased} is the modest significant contribution of our study.
It states that the bias in observed pair labels can be simply and straightforwardly corrected by weighing the observed pairs.
Theorem~\ref{thm:unbiased} suggests that when we weigh the observed pairs, we trade-off the ability to train on unbiased pair labels to train on a slightly different distribution $\tilde{\mathcal{P}}$ over ${\mathcal X}^2$.
However, the change in the feature distribution does not affect the bias of the final learned model when given some mild conditions.
In these cases, training using the weighted pairs of items with biased pair labels is equivalent to training using the same pairs of items but with true pair labels.
Thus, in our experiments, we optimize the weighted KL-divergence loss on the paired dataset $\mathcal{D}_\mathrm{pair}$: 
\begin{align}
 \begin{split}
  \label{eq:exp_loss}
   \argmin_{h} \frac{1}{|\mathcal{D}_\mathrm{pair}|}\sum_{(q, x_{ij}, l_{ij}) \in \mathcal{D}_\mathrm{pair}} w(x_{ij},l_{ij})\left[D_{KL}(\hat{l}_q(x_{ij})\|l_{ij})\right].
 \end{split}
\end{align}
\subsection{Determining Coefficients}
\label{subsec:coefficients}
We now describe how to learn the coefficients $\lambda$.
In practice, $K^2$ is often small in our approach, which is advantageous.
Thus, we propose to iteratively learn the coefficients so that the final model satisfies the desired pairwise constraints either on the training or validation set.
For simplicity, we first restrict the pairwise constraint as a pairwise statistical constraint.
Then, we explain how to apply our algorithm to other pairwise constraints.

Intuitively, if the average exposure of $G_k$ is lower than the average exposure of $G_l$, the corresponding coefficients $\lambda_{kl}$ should be increased, 
(i.e., if we increase the weights of the pairs $(G_k, G_l)$ with positive pair labels and decrease the weights of the pairs $(G_k, G_l)$ with negative pair labels,
then the ranking model will be encouraged to rank items of $G_k$ higher than items of $G_l$, whereas items of $G_l$ are placed lower than items of $G_k$.
Both events will cause the difference in average exposure between $G_k$ and $G_l$ to reduce.
Thus, $\hat{l}$ will move closer to $l_\mathrm{true}$, namely, $h$ will move closer to the true, unbiased ranking model.

Accordingly, Algorithm \ref{al:proposed} works by iteratively performing the following steps.
(1) Evaluate the pairwise statistical constraint.
(2) Update the coefficients by subtracting the respective constraint violation multiplied by a step size.
(3) Compute the weights for each pair based on the multipliers using the closed-form provided by Proposition \ref{proposition}.
(4) Retrain the ranking model, given these weights.

Algorithm \ref{al:proposed} employs the pairwise ordering method procedure $H$, which takes pairs of items from dataset $\mathcal{D}$ and weights $w_{ij}$ to output a ranking model.
In practice, $H$ can be any pairwise training procedure that minimizes a weighted loss function over some parametric function class (e.g.,~\cite{DBLP:conf/sigir/CaoXLLHH06,DBLP:conf/uai/RendleFGS09,burges2010ranknet}).

Our resulting algorithm simultaneously minimizes the weighted loss and maximizes fairness by learning the coefficients.
These processes may be interpreted as competing goals with different objective functions.
Thus, it is a form of a nonzero-sum two-player game.
The use of nonzero-sum two-player games in fairness was proposed by \cite{DBLP:conf/alt/CotterJS19,DBLP:conf/aistats/JiangN20} for classification and  \cite{DBLP:conf/aaai/NarasimhanCGW20} for ranking and regression.

\subsection{Extension to Other Constraints}
\label{subsec:extension}
In the previous section, we assumed that $l_\mathrm{true}$ is known; now, we explain how to approximate it in practice.
In general, the values of the pairwise constraint $c^\mathrm{pair}_{kl}$ depend on $l_\mathrm{true}$, which are unknown.
In these cases, we proxy the true pair label function $l_\mathrm{true}$ by observed pair label function $l_\mathrm{bias}$.
This allows us to evaluate the value of the constraint $c^\mathrm{pair}_{kl}$ using the observed pair label $l_{ij}$ and the group membership pair $(G_k, G_l), i\in G_k, j\in G_l$.
Then we propose applying the same technique described in Subsection~\ref{subsec:coefficients} that  iteratively weighs the loss to satisfy the desired constraint.
Thus, we can apply other pairwise constraints in Algorithm \ref{al:proposed}.
\begin{figure*}[t]
\begin{tabular}{c}
 \begin{minipage}{0.25\hsize}
   \begin{center}
    \includegraphics[width=35mm]{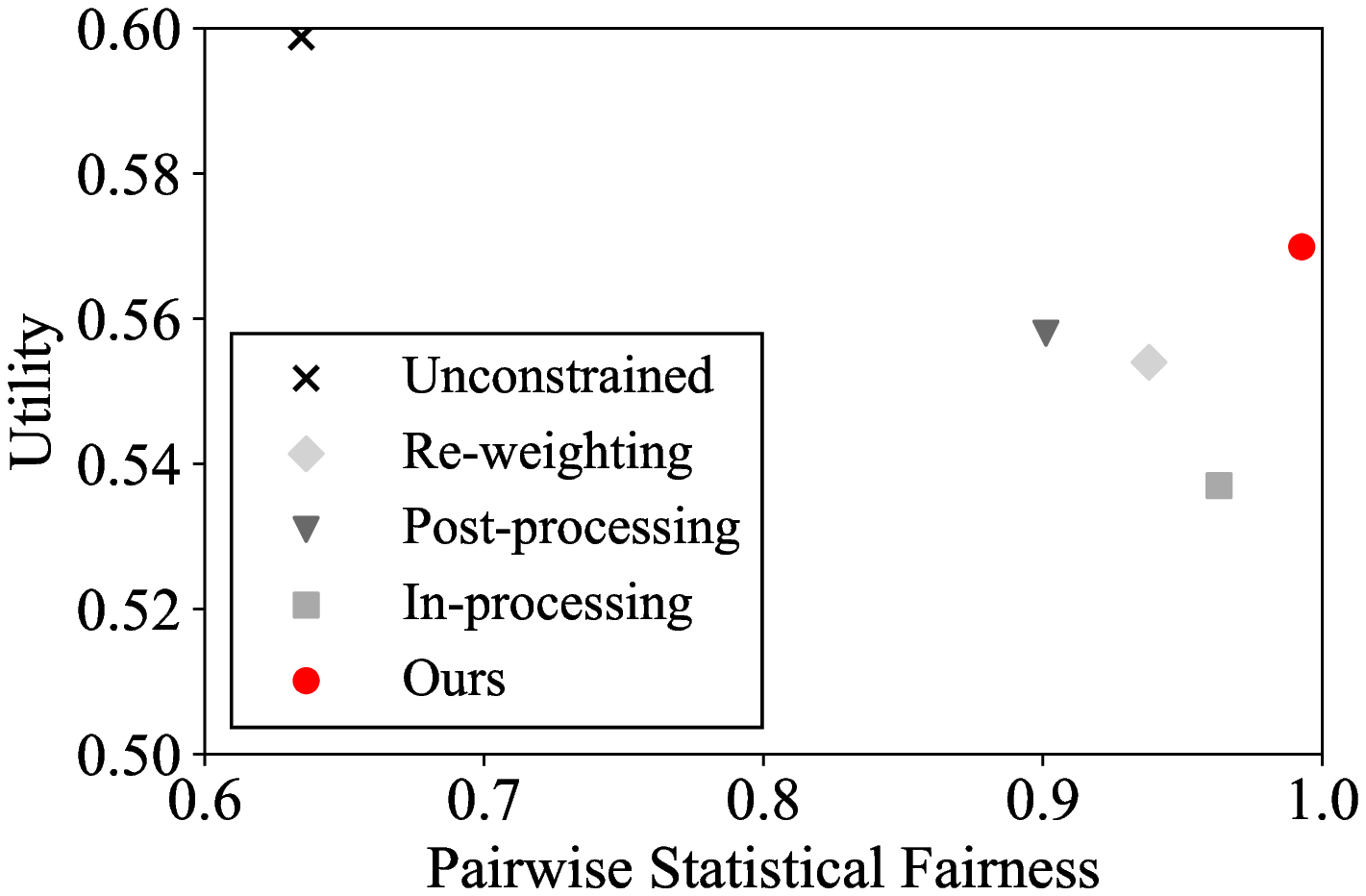}
   \end{center}
  \end{minipage}
  \begin{minipage}{0.25\hsize}
   \begin{center}
    \includegraphics[width=35mm]{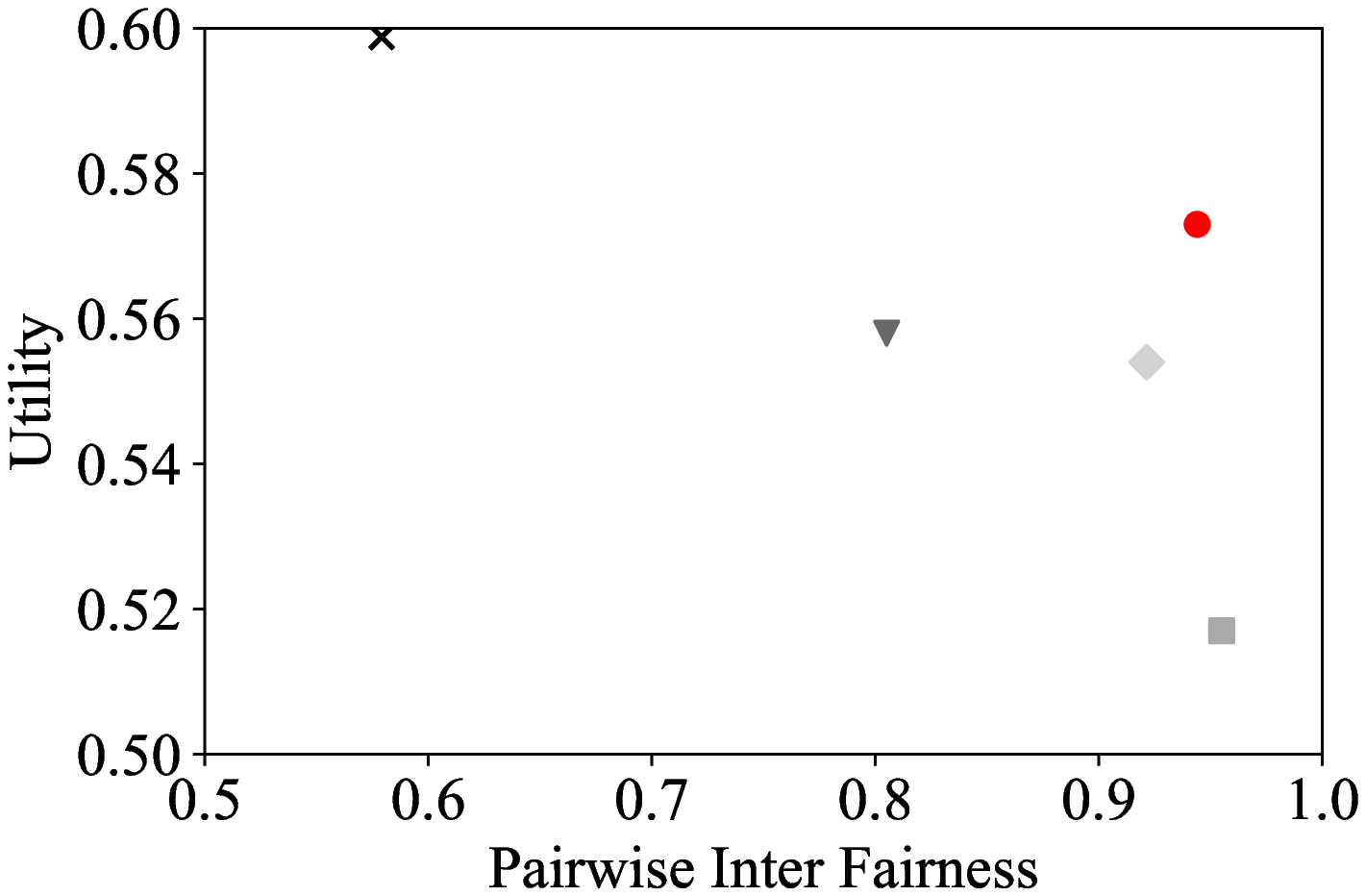}
   \end{center}
  \end{minipage}
  \begin{minipage}{0.25\hsize}
   \begin{center}
    \includegraphics[width=35mm]{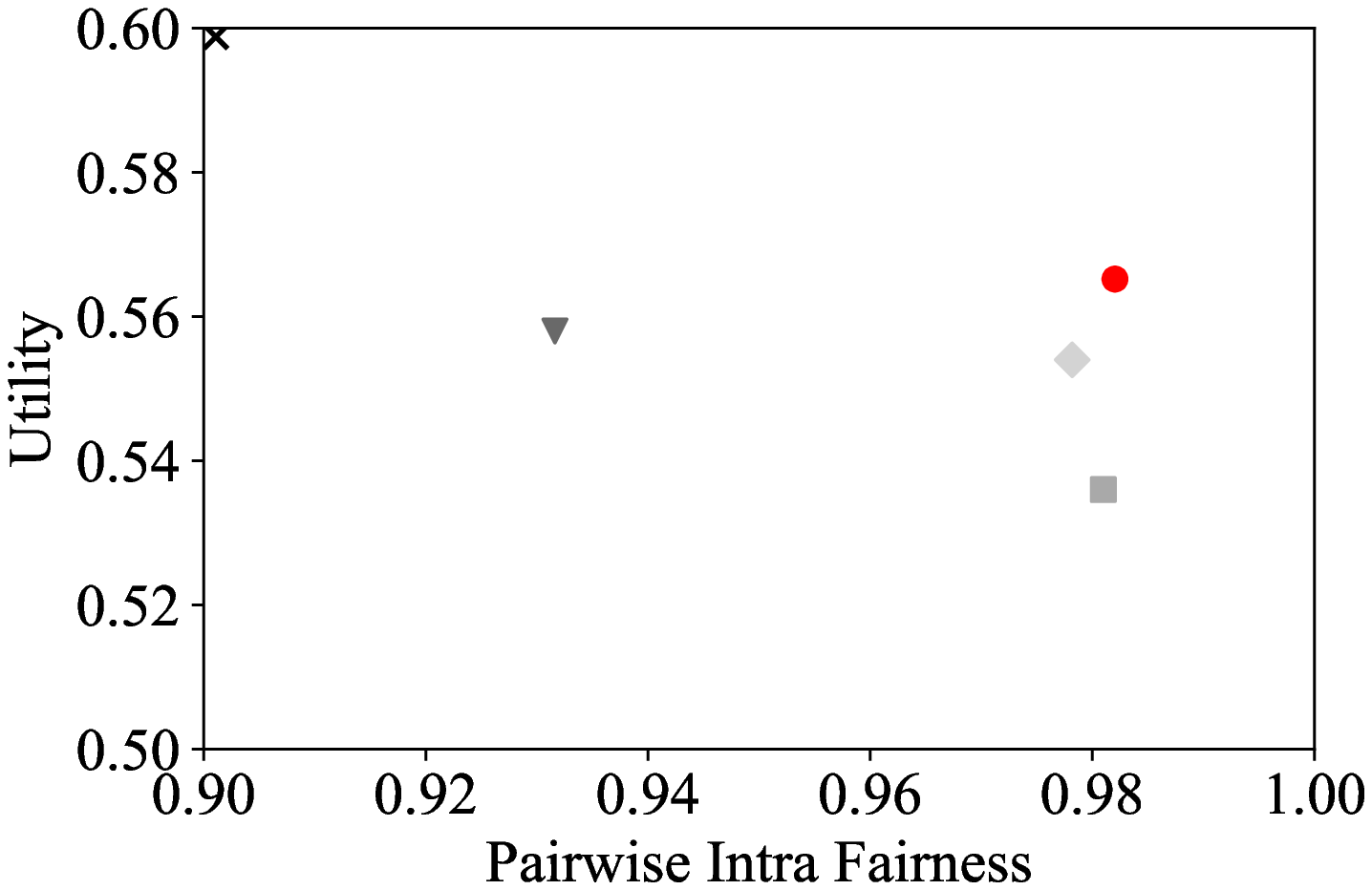}
   \end{center}
  \end{minipage}
  \begin{minipage}{0.25\hsize}
   \begin{center}
    \includegraphics[width=35mm]{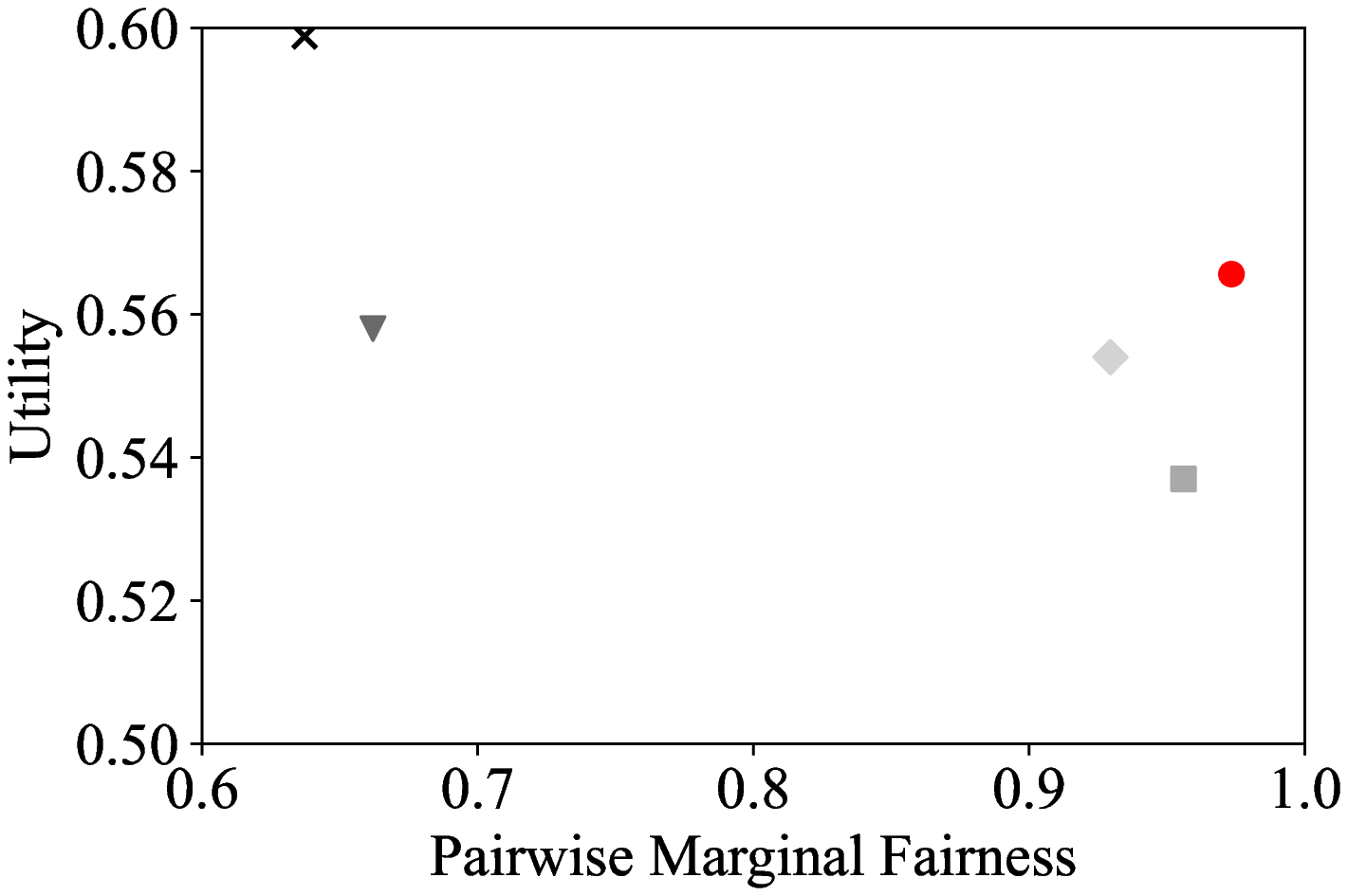}
   \end{center}
  \end{minipage}
\end{tabular}
  
\begin{tabular}{c}
  \begin{minipage}{0.25\hsize}
   \begin{center}
    \includegraphics[width=35mm]{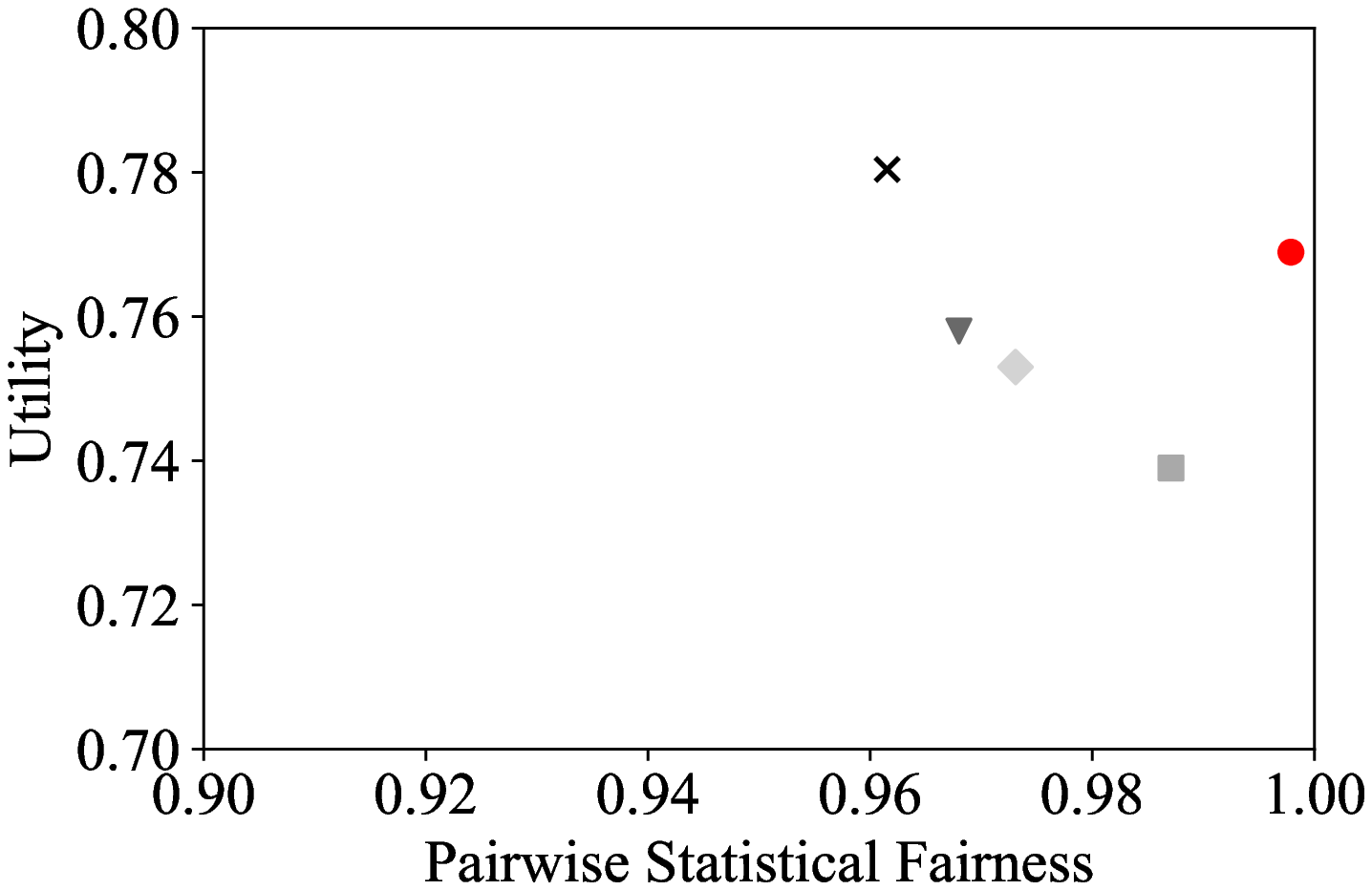}
   \end{center}
  \end{minipage}
  \begin{minipage}{0.25\hsize}
   \begin{center}
    \includegraphics[width=35mm]{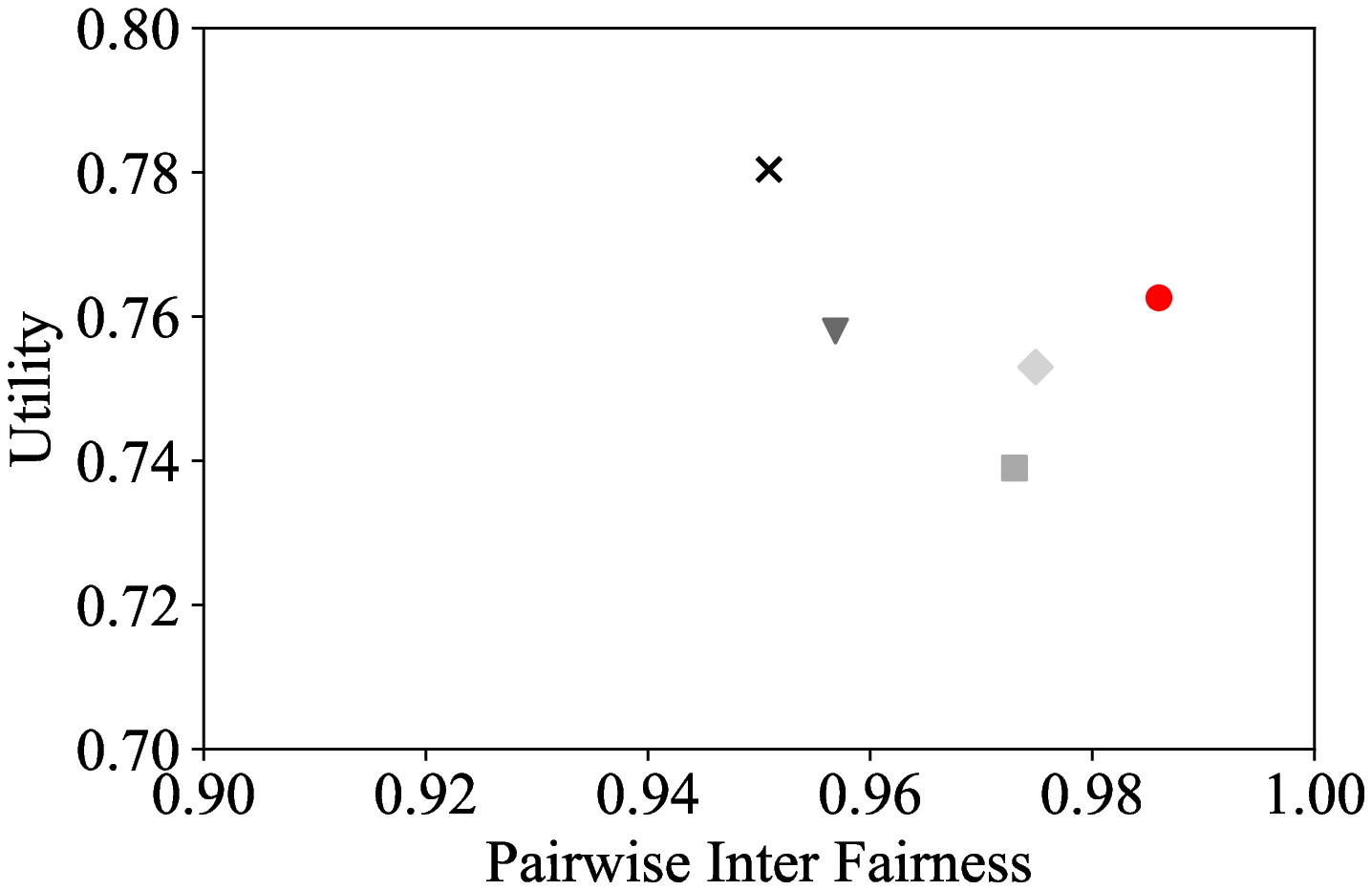}
   \end{center}
  \end{minipage}
  \begin{minipage}{0.25\hsize}
   \begin{center}
    \includegraphics[width=35mm]{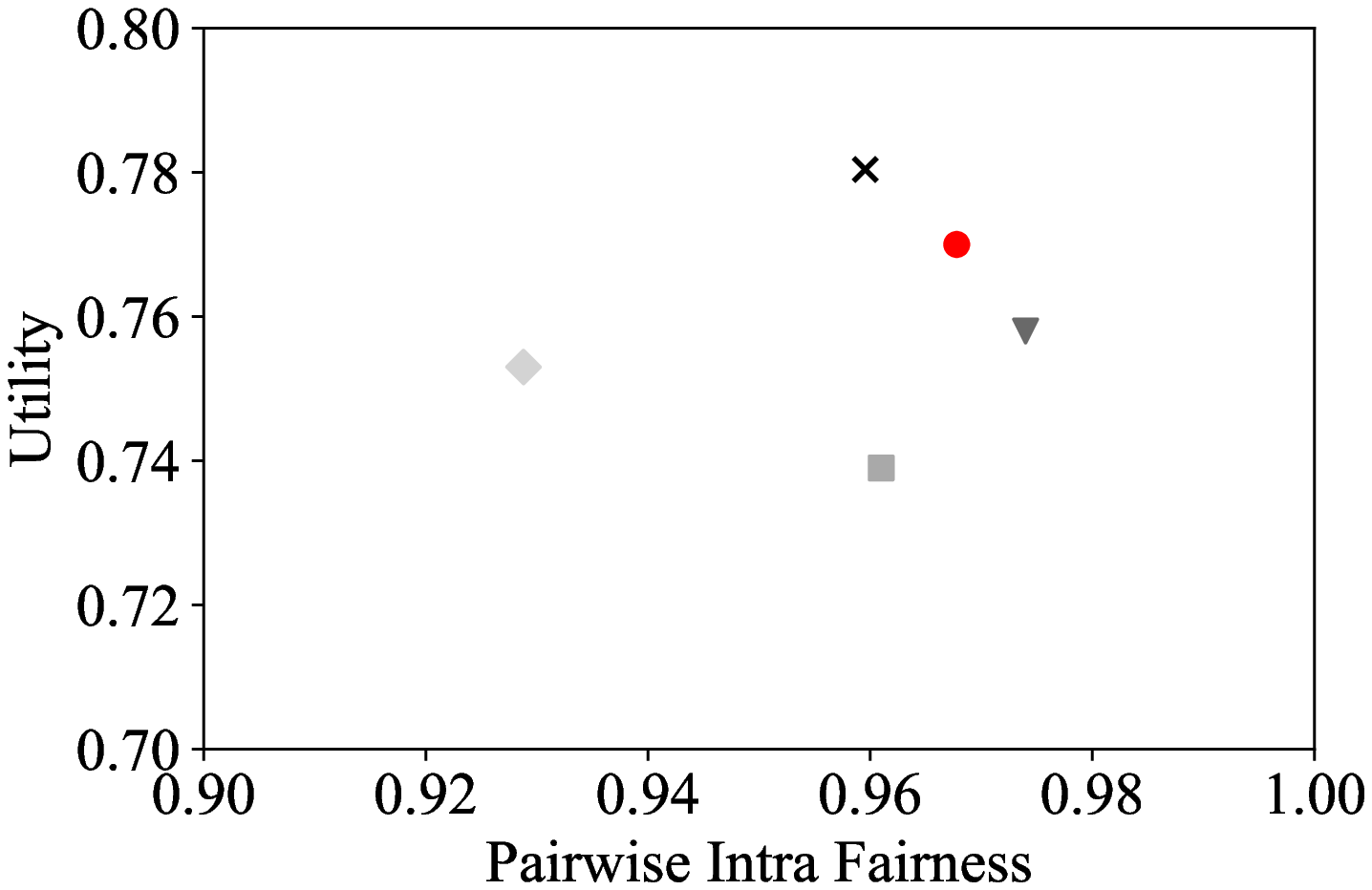}
   \end{center}
  \end{minipage}
  \begin{minipage}{0.25\hsize}
   \begin{center}
    \includegraphics[width=35mm]{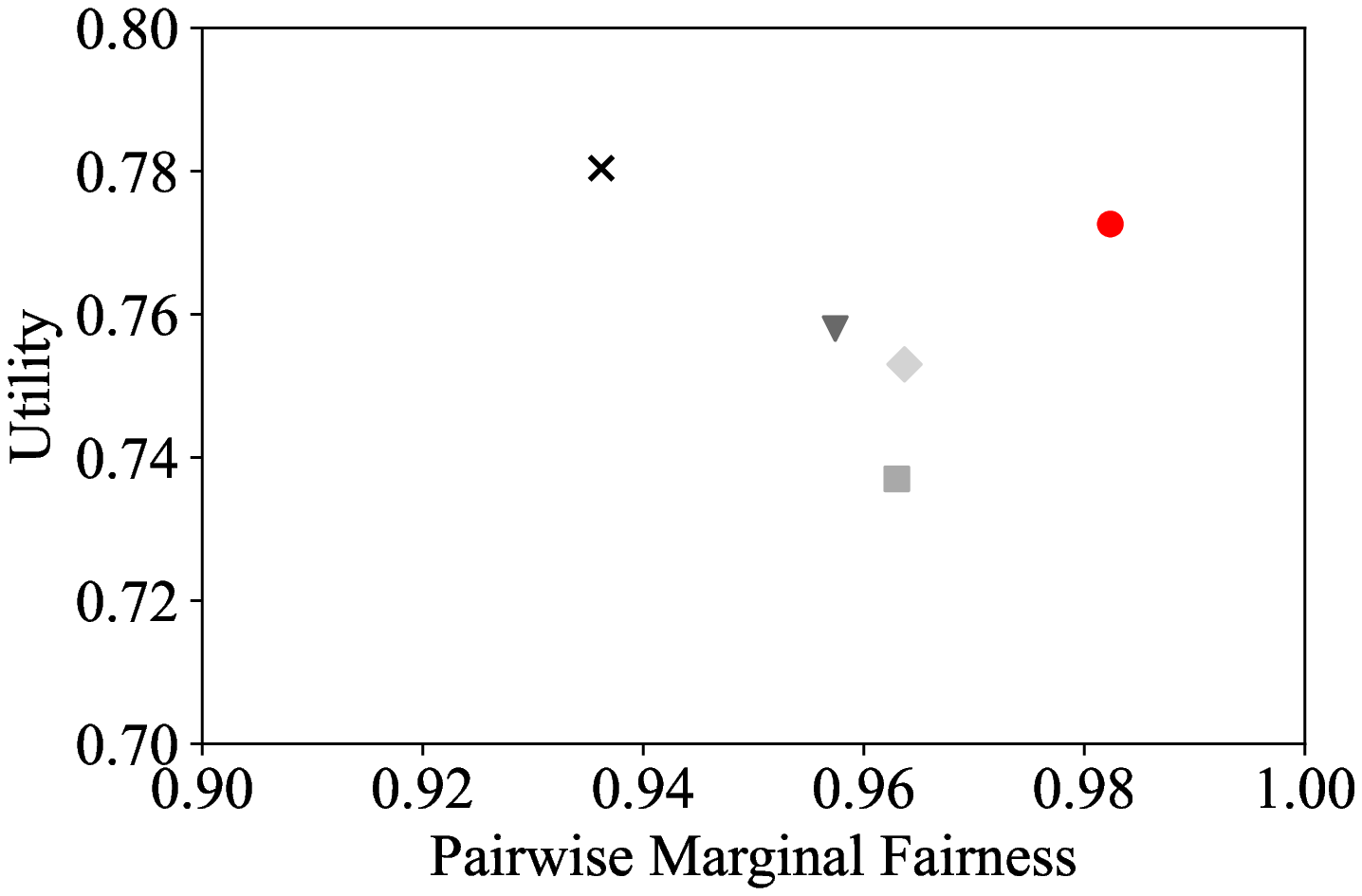}
   \end{center}
  \end{minipage}
\end{tabular}

\begin{tabular}{c}
  \begin{minipage}{0.25\hsize}
   \begin{center}
    \includegraphics[width=35mm]{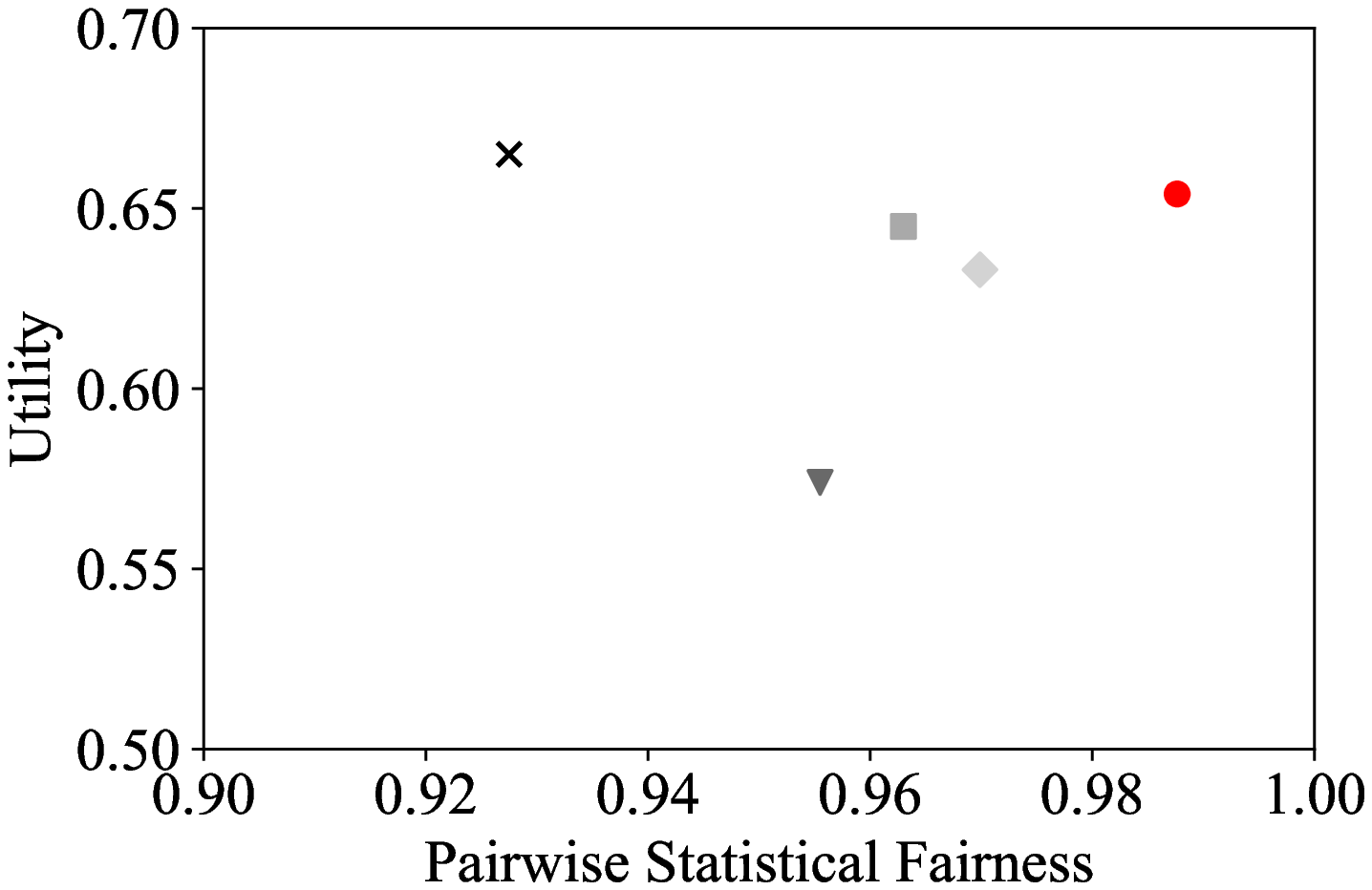}
   \end{center}
  \end{minipage}
  \begin{minipage}{0.25\hsize}
   \begin{center}
    \includegraphics[width=35mm]{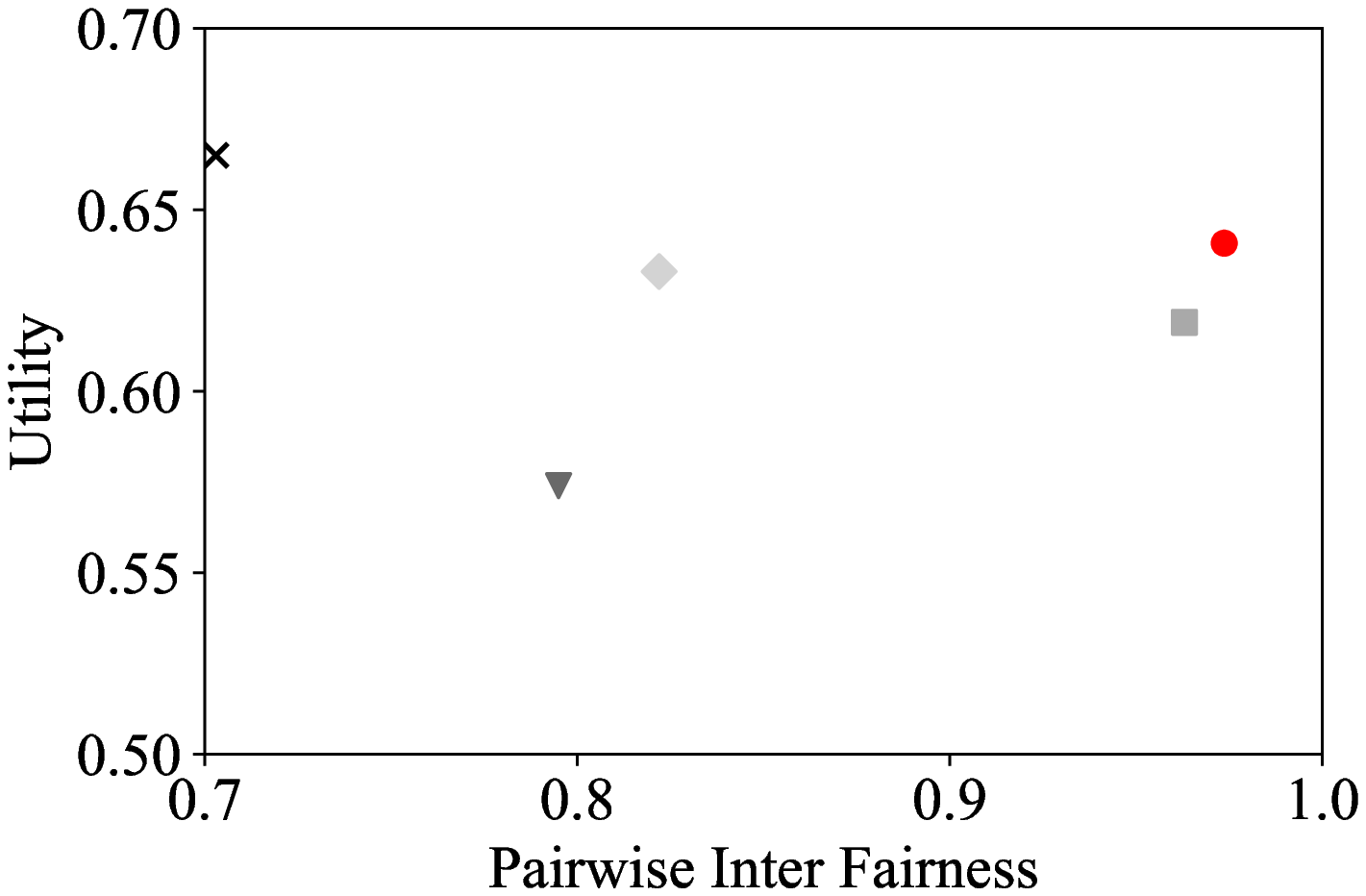}
   \end{center}
  \end{minipage}
  \begin{minipage}{0.25\hsize}
   \begin{center}
    \includegraphics[width=35mm]{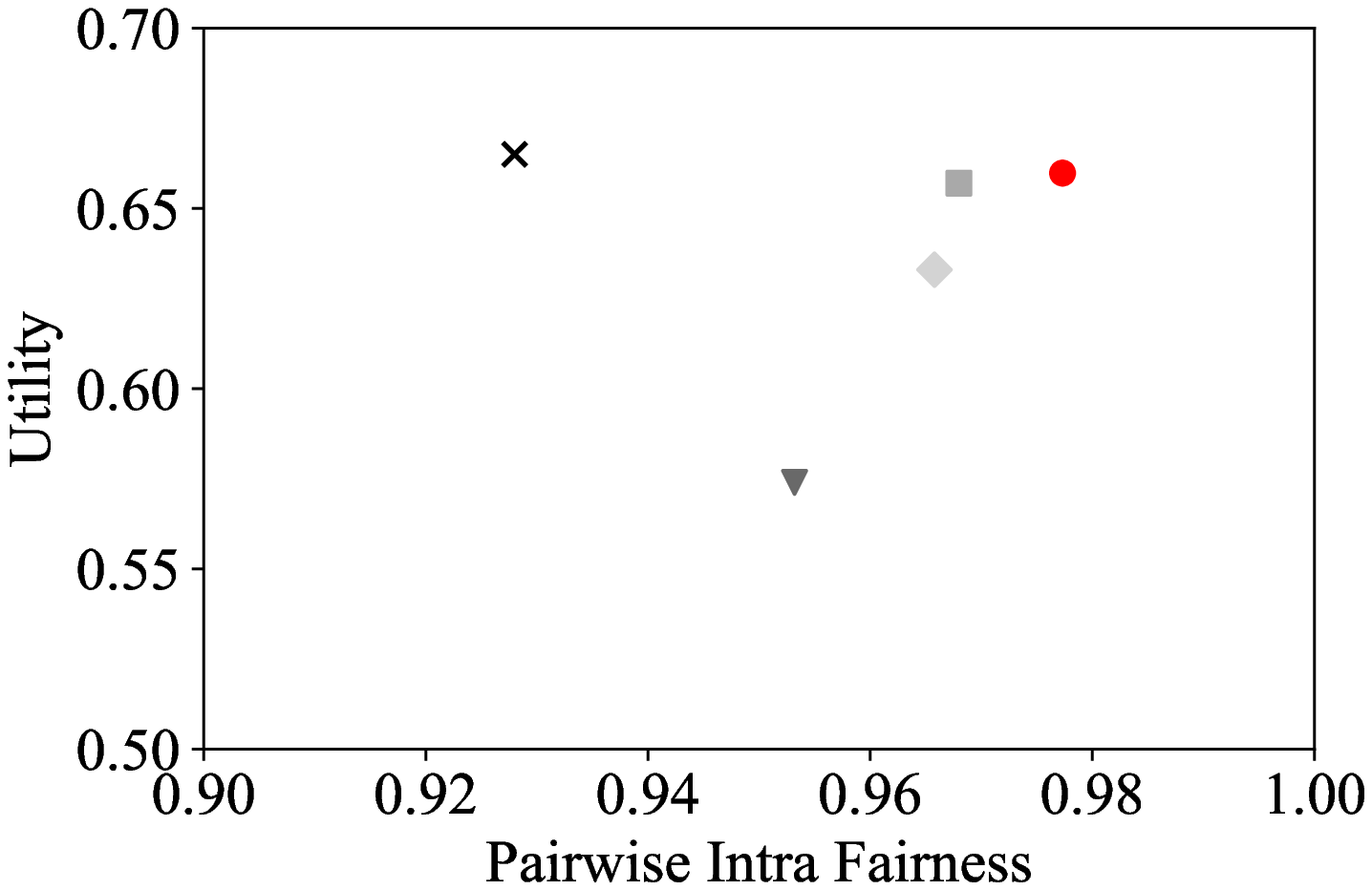}
   \end{center}
  \end{minipage}
  \begin{minipage}{0.25\hsize}
   \begin{center}
    \includegraphics[width=35mm]{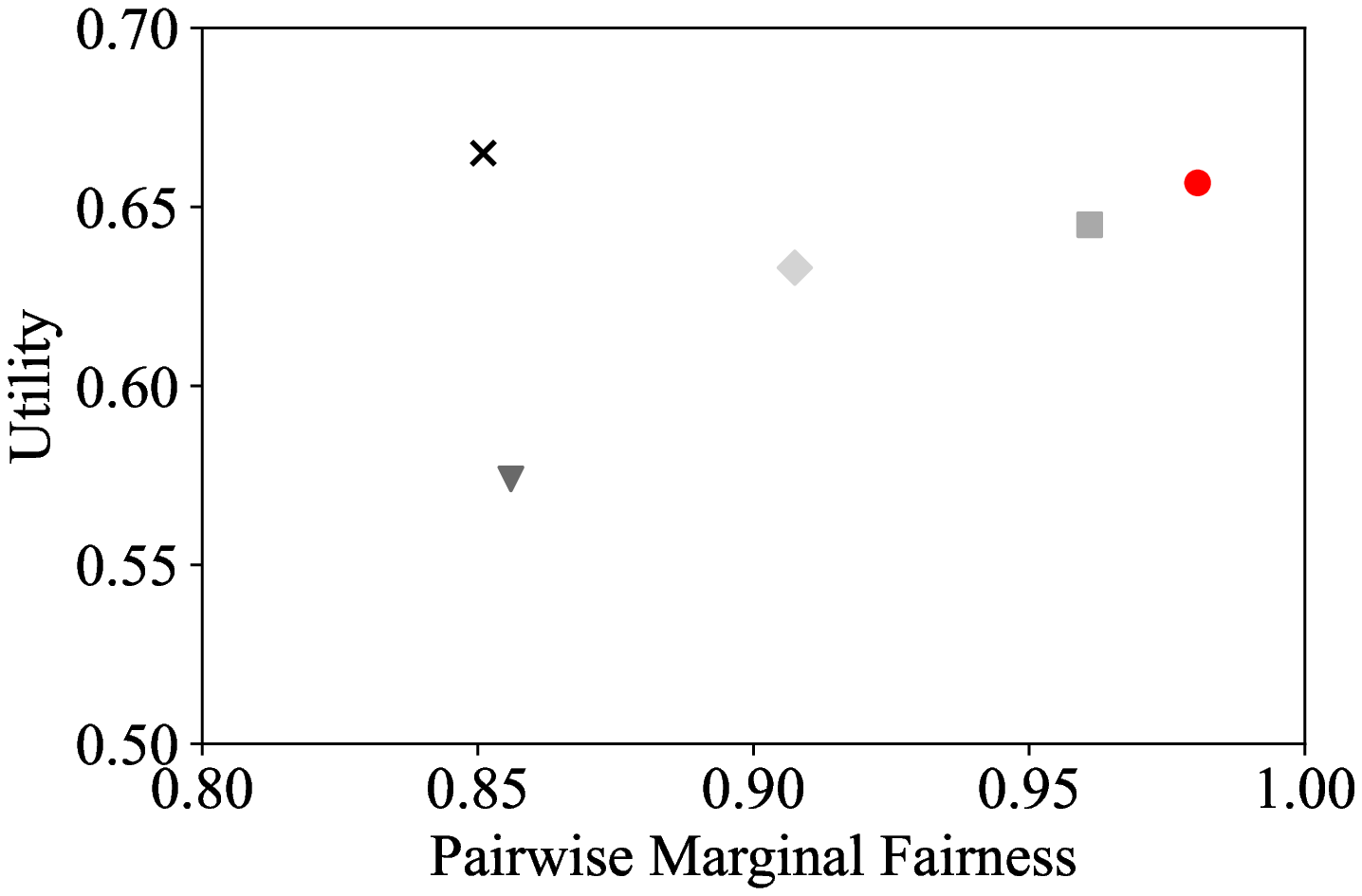}
   \end{center}
  \end{minipage}
\end{tabular}
  \caption{\textbf{Experiment results for fair ranking tasks}:
     Each column corresponds to a fairness measurement.
     Each row corresponds to a dataset:
     \textbf{Left} TREC.
     \textbf{Middle} ES.
     \textbf{Right} MSLR.
     In each graph, we show the best result for five methods: the unconstrained method, the re-weighting method \cite{DBLP:conf/aistats/JiangN20}, the post-processing method \cite{DBLP:conf/kdd/SinghJ18}, the in-processing method \cite{DBLP:conf/aaai/NarasimhanCGW20}, and our method.
     The best trade-off point is located in the upper right corner of each graph.
     All reported results are evaluated on the test set.
   }
  \label{fig:tradeoff}
\end{figure*}

\begin{figure*}[t]
 \begin{minipage}{0.33\hsize}
  \begin{center}
   \includegraphics[width=50mm]{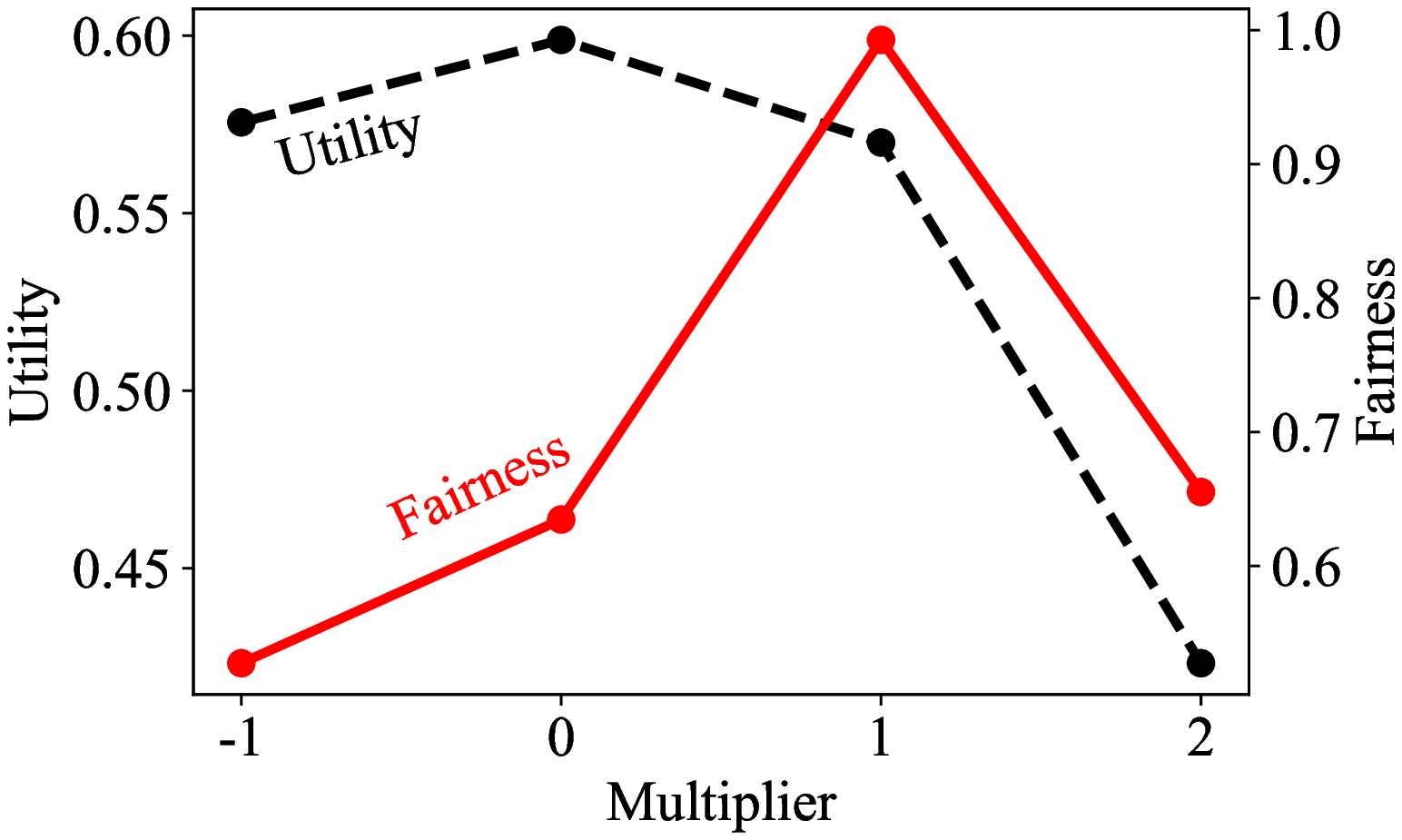}
  \end{center}
  \subcaption{TREC}
 \end{minipage}
 \begin{minipage}{0.33\hsize}
 \begin{center}
  \includegraphics[width=50mm]{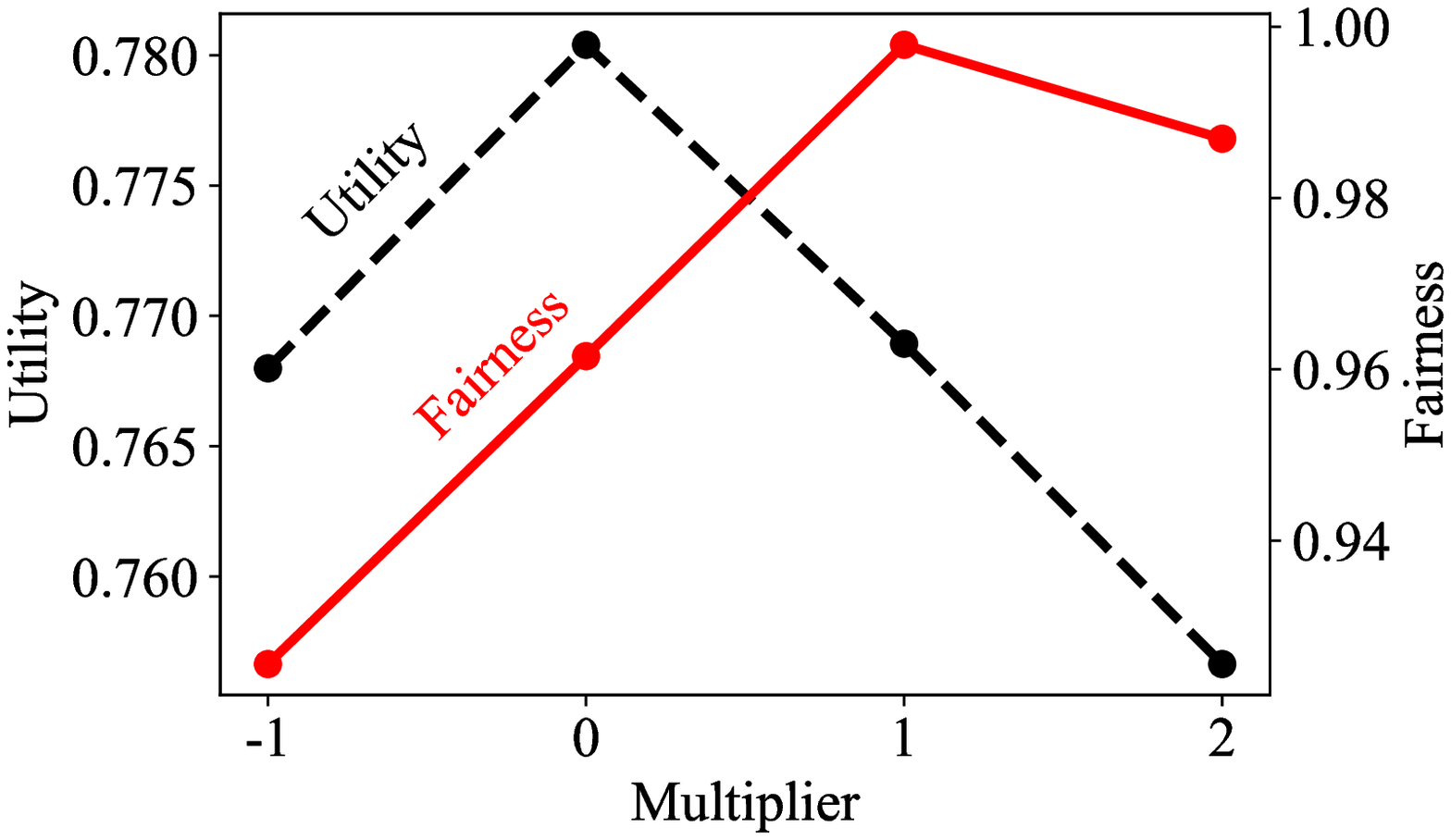}
 \end{center}
 \subcaption{ES}
 \end{minipage}
 \begin{minipage}{0.33\hsize}
 \begin{center}
  \includegraphics[width=50mm]{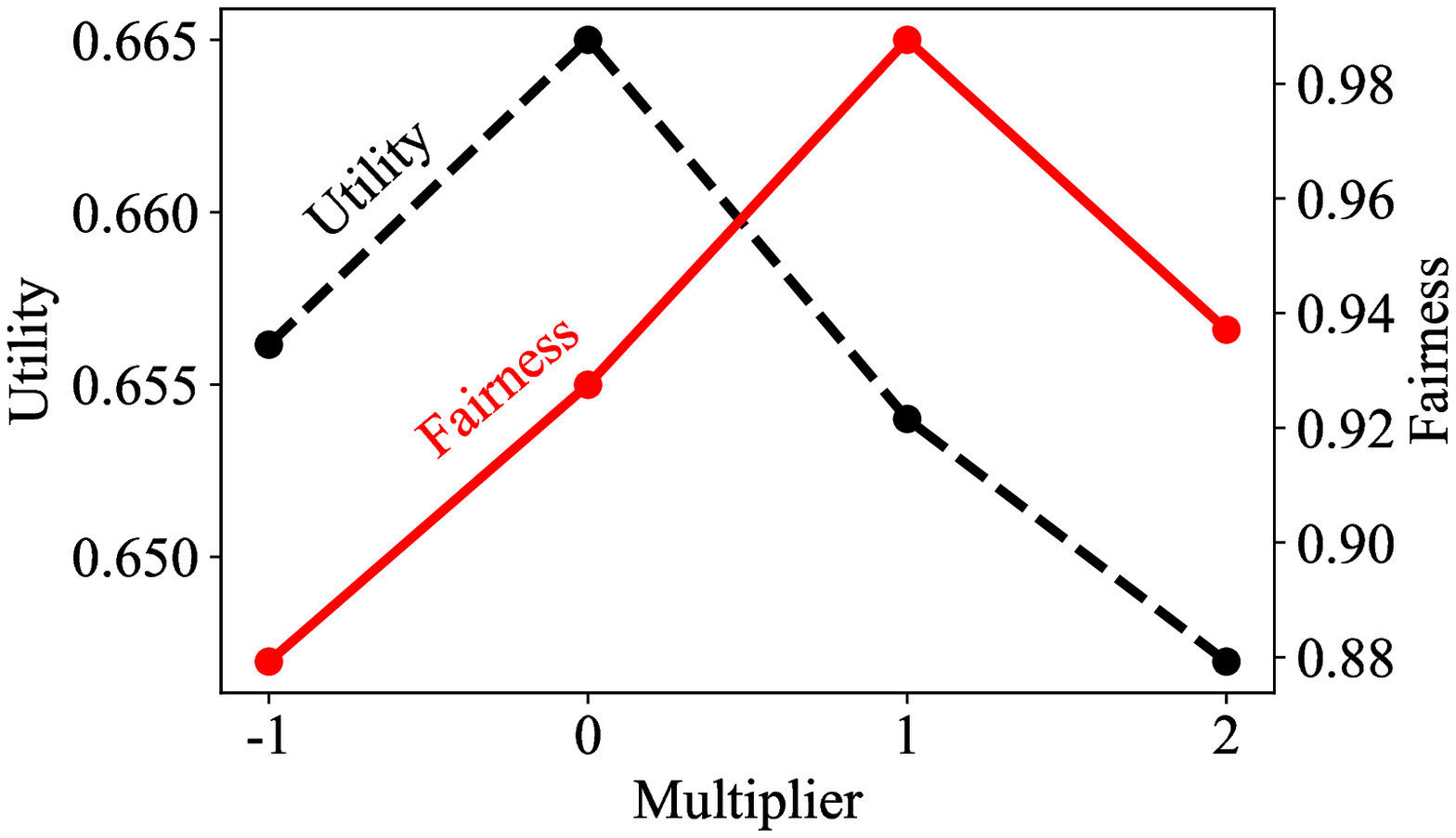}
 \end{center}
 \subcaption{MSLR}
   \end{minipage}
  \caption{Results of the trade-off curve for our method as the coefficient $\lambda$ changes.
   We show the test utility and fairness for pairwise statistical constraint as a function of changes in weights.
   }
  \label{fig:tradeoff_curve}
\end{figure*}

\section{Experiment}
\label{sec:experiment}
\subsection{Experimental Setup}
\label{subsec:experimental_setup}

\subsubsection*{Datasets}
\begin{table}[t]
 \caption{Summary of experimental dataset.}
 \label{tab:dataset}
  \begin{tabular}{cccc} \hline
   & TREC &  ES & MSLR \\ \hline \hline
   \# queries & 48 & 5 & 2805 \\ 
   \# items/query & 200 & 480 (ave.) & 132 (ave.) \\
   label& 
   \begin{tabular}{c}
   expertise\\ level
   \end{tabular}&
   \begin{tabular}{c}
   first-year\\ performance
   \end{tabular}&
   \begin{tabular}{c}
   relevance\\ score
   \end{tabular}\\
   protected. & gender & gender & quality score \\ \hline
  \end{tabular}
\end{table}
In our experiments, we use three real-world datasets for benchmark fair ranking tasks, summarized in Table~\ref{tab:dataset}.
These datasets are commonly used in fair ranking tasks and comprise query-item pairs.

The first dataset is the W3C experts (TREC Enterprise) dataset \cite{DBLP:conf/trec/CraswellVS05}, where the task is to rank candidates based on their expertise level on a topic using a corpus of their written e-mails.
This dataset has $48$ queries, and each query has $200$ items.
We preprocess the TREC dataset following~\cite{DBLP:conf/cikm/ZehlikeB0HMB17}.
Each item is described by a feature vector comprising five attributes with numerical and categorical features and a label ($y=1$ if the expertise level is higher than the median; otherwise, $y=0$).
We consider two groups based on their gender (male or female).

The second dataset is the Engineering Students (ES) dataset, where the task is to rank students based on their average first-year performance using admission test scores.
This dataset has $5$ queries, and each year has an average of $480$ items.
We use the preprocessed ES dataset provided by an existing fair ranking study~\cite{DBLP:conf/www/Zehlike020}, as the original dataset is not publicly available.
Each item is described by a feature vector comprising five attributes with numerical and categorical features and a label ($y=1$ if the average first-year performance is above the median; otherwise, $y=0$).
We consider two groups based on their gender (male or female).

The third dataset is the Microsoft Learning to Rank (MSLR) dataset~\cite{qin2013introducing}, where the task is to rank web pages based on their relevance scores to a query using features of the web pages.
This dataset has $2805$ queries, and each query has an average of $132$ students.
We preprocess the MSLR dataset following \cite{DBLP:conf/sigir/YadavDJ21}.
Each item is described by a feature vector comprising $135$ attributes with numerical and categorical features and a label (if the relevance score is $2$ or more, $y=1$, otherwise $y=0$).
We consider five groups based on their quality scores, with the same number of items for each group.

We randomly split the queries into training and test sets with a ratio of $4:1$.
We split the training and validation sets in a ratio of $4:1$, where the validation set is used to tune hyperparameters.
We evaluate all metrics for individual queries and report the average across queries in the test set.
\subsubsection*{Baselines}
We compare our method against the following four methods:
(1) the unconstrained method that minimizes unweighted pairwise loss function to optimize only utility and not fairness,
(2) the re-weighting method \footnote{https://github.com/google-research/google-research/tree/master/label\_bias} that minimizes pointwise loss function Eq.~\eqref{eq:point_loss} by weighting each label to satisfy the equal opportunity constraint~\cite{DBLP:conf/aistats/JiangN20},
(3) the post-processing method \footnote{https://github.com/ashudeep/Fair-PGRank} on estimated labels from a fairness-oblivious linear regression to satisfy the disparate impact constraint \cite{DBLP:conf/kdd/SinghJ18}, which solves a linear problem (LP) per query with some slack $\lambda$ in the constraint, and 
(4) the in-processing method \footnote{https://github.com/google-research/google-research/tree/master/pairwise\_fairness} that employs the Lagrangian approach and jointly learns the model parameters and Lagrange multipliers to satisfy the pairwise fairness \cite{DBLP:conf/aaai/NarasimhanCGW20}, which uses a hinge relaxation to make the constraints differentiable and adjusts a slack $\lambda$ for the constraints.
We employ a linear ranking model $h$ and use Adam \cite{DBLP:journals/corr/KingmaB14} for gradient updates in all methods in the experiments.

\subsubsection*{Hyperparameters}
We tune the hyperparameters across all experiments using the validation set.
Using the validation set, we tune the learning rate of Adam within the range of $[10^{-1},10^{-4}]$ for all methods. 
For the unconstrained methods, we tune the number of training epochs within the range of $[10,50]$ and the minibatch size within the range of $[2^8,2^{11}]$.
For the re-weighting method, we fix the learning rate for the coefficients to $\eta = 1$ and tune the number of loops $T$ within the range of $[50,100]$.
For the post-processing method, we chose the smallest slack for its constraint in increments of $10\%$ until the LPs are feasible.
For the in-processing method, we fixed the number of training epochs at $2,500$ and chose the smallest slack for its constraint in increments of 5$\%$ until the method returned a nondegenerate solution.
In addition, we tune the minibatch size within the range of $[2^4,2^7]$ for the in-processing method.
For our method, we fix the learning rate for the coefficients to $\eta = 1$ and tune the number of loops $T$ within the range of $[10,50]$.
\subsubsection*{Evaluation Metrics}
\label{subsec:evaluation}
We use AUC (Eq.~\eqref{eq:auc}) as the utility measurement in all experiments because we want to examine the trade-off between utility and fairness in a pairwise manner.
For all experiments, we evaluate our procedures with respect to the pairwise statistical, inter, intra, and marginal constraints.
These constraints are computed according to~\cref{def:pair-statistical,def:pair-inter,def:pair-intra,def:pair-marginal}, but we do not have access to the true label function.
Instead, we approximate the true pair label function $l_\mathrm{true}$ using the observed pair label function $l_\mathrm{bias}$ in the constraints.
In addition, we report the fairness of each experiment in the range $[0,1]$ using the following equation that returns $1$ if the model is completely fair.
\begin{equation}
  \label{eq:worst_fairness}
  1 - \underbrace{\max \left\{ \Delta_{kl} - \Delta_{lk} \mid k,l\in [K] \right\}}_\text{fairness\  violation}.
\end{equation}


\subsection{Results}
We present the results in Figure \ref{fig:tradeoff}.
Our method often yields a ranking model with the highest test fairness among the five methods (Figure \ref{fig:tradeoff}).
The results suggest that the fairness in ranking is significantly improved by weighting the observed data in a pairwise manner.
We also include test utility in the results.
Our method can effectively optimize the trade-off between utility and fairness although the major goal of fair methods is to produce fair outputs.

In addition, the results highlight the disadvantages of existing methods for producing a fair ranking.
The re-weighing method often yields a poor trade-off between fairness and utility.
As the re-weighing method uses pointwise ordering method that ignores the query-level dataset structure, its loss function is dominated by queries with a large number of items~\cite{DBLP:journals/ftir/Liu09}. 
Moreover, the weighting procedure of this method does not consider the order of items, resulting in suboptimal fairness in ranking~\cite{DBLP:conf/kdd/BeutelCDQWWHZHC19}.
The in-processing method employs pairwise ordering method; however, its results do not consistently provide a fair ranking.
The in-processing method suffers from overfitting as it requires approximating nondifferentiable constraints with a certain amount of slack to make gradient-based training possible~\cite{DBLP:conf/icml/CotterGJSSWWY19}.
The post-processing method often failed to improve fairness well, even with tuning its slack for the constraints.
Further, the post-processing method performs with poor utility when it requires a large slack in the constraints to solve the LP per query feasibly.

\subsection{Results for Changes in Coefficients}
\label{subsec:insight}
We now validate our algorithm for learning the coefficients.
Our method does not require a hyperparameter to adjust the fairness constraints.
Instead, we train a model using appropriate weights of the training items generated by Algorithm~\ref{al:proposed}.
Thus, we investigate whether the weights obtained by this algorithm can improve fairness.
To investigate, we consider the optimal coefficients $\lambda = \lambda^*$ found by Algorithm~\ref{al:proposed}.
Then we train a model on weighted training pairs by changing $\lambda$.

In Figure \ref{fig:tradeoff_curve}, we illustrate the test utility and fairness for the pairwise statistical constraint as a function of changes in weights.
We only show the results of the pairwise statistical constraint as we have similar observations on other constraints.
For each constant value, $x$, on the $x$-axis, we train a ranking model with the data weights based on the setting $\lambda = x\cdot \lambda^*$.
Then, we plot the utility and fairness.

For $x=1$, corresponding to our method, the best fairness is achieved (Figure \ref{fig:tradeoff_curve}).
The results suggest that our algorithm can correct the biases appropriately.
Meanwhile, $x=0$, which corresponds to training on the unweighted pairs, gives us the highest utility.
Remarkably, for $x=2$, the results show the lowest utility and poor fairness.
We believe this is because too many irrelevant items are placed at the top simply as they belong to a specific group. 
\section{Conclusions}
\label{sec:conclude}
In this paper, we presented a pre-processing method based on pairwise ordering method for fair ranking.
Our pairwise ordering method can identify biases in a pair of groups by assuming that there exists a pair of ground-truth unbiased data.
Our method for correcting these biases is based on weighting the training pairs of items.
We showed that a ranking model trained on the weighted data is unbiased to the ground truth.
Experimentally, we showed that our method yields a fair ranking model in various fairness contexts.
These results demonstrate the advantage of our pre-processing method in producing a fair ranking.
In the future, we will consider extending our method to address individual fairness.

\begin{acks}
We would like to thank Hiroya Inakoshi for his in-depth feedback on this paper.
We would also like to thank Enago (www.enago.jp) for the English language review.
\end{acks}

\bibliographystyle{ACM-Reference-Format}
\bibliography{ref.bib} 

\end{document}